\newcommand\BibTeX{{\rmfamily B\kern-.05em \textsc{i\kern-.025em b}\kern-.08em
T\kern-.1667em\lower.7ex\hbox{E}\kern-.125emX}}
\crefname{equation}{}{} 
\theoremstyle{plain}
\newtheorem{theorem}{Theorem}
\newtheorem{lemma}{Lemma}
\newtheorem{corollary}{Corollary}
\theoremstyle{definition}
\newtheorem{assumption}{Assumption}
\theoremstyle{remark}
\newcommand{\T}{\mathrm{T}}             
\newcommand{\mb}[1]{\mathbf{#1}}        
\newcommand{\ts}[1]{_{#1}}           
\newcommand{\ti}{n}
\newcommand{\LLC}{$L$-Lipschitz continuous}
\newcommand{\Reps}{R_{\epsilon}}
\newcommand{\Rbeps}{\bar{R}_{\epsilon}}
\newcommand{\Rbo}{\bar{R}_{0}}
\newcommand{\N}{N_n}
\def\*#1{\bm{#1}}
\newcommand{\defeq}{\vcentcolon=}
\newcommand{\argmax}{\operatornamewithlimits{argmax}}
\begin{document}

\title{Bayesian Optimization with Safety Constraints: Safe and Automatic Parameter Tuning in Robotics}
\author{Felix Berkenkamp\affilnum{1},
Andreas Krause\affilnum{1}, and
Angela P. Schoellig\affilnum{2}}
\affiliation{\affilnum{1}Learning \& Adaptive Systems Group, Department of Computer Science, ETH Zurich, Switzerland.\\
\affilnum{2}Dynamic Systems Lab, Institute for Aerospace Studies, University of Toronto, Canada.}
\corrauth{Felix Berkenkamp, Institute for Machine Learning, CAB G 66, Universitaetstrasse 6, 8092 Zurich, Switzerland.}
\email{befelix@inf.ethz.ch}
\runninghead{Berkenkamp et al}

%
\begin{abstract}
Robotic algorithms typically depend on various parameters, the choice of which significantly affects the robot's performance. While an initial guess for the parameters may be obtained from dynamic models of the robot, parameters are usually tuned manually on the real system to achieve the best performance. Optimization algorithms, such as Bayesian optimization, have been used to automate this process. However, these methods may evaluate unsafe parameters during the optimization process that lead to safety-critical system failures. Recently, a safe Bayesian optimization algorithm, called~\textsc{SafeOpt}, has been developed, which guarantees that the performance of the system never falls below a critical value; that is, safety is defined based on the performance function. However, coupling performance and safety is often not desirable in robotics. For example, high-gain controllers might achieve low average tracking error (performance), but can overshoot and violate input constraints. In this paper, we present a generalized algorithm that allows for multiple safety constraints separate from the objective. Given an initial set of safe parameters, the algorithm maximizes performance but only evaluates parameters that satisfy safety for all constraints with high probability. To this end, it carefully explores the parameter space by exploiting regularity assumptions in terms of a Gaussian process prior. Moreover, we show how context variables can be used to safely transfer knowledge to new situations and tasks. We provide a theoretical analysis and demonstrate that the proposed algorithm enables fast, automatic, and safe optimization of tuning parameters in experiments on a quadrotor vehicle. 
\end{abstract}

\maketitle


\section{Introduction}

\begin{figure}[t!]
\includegraphics[width=\columnwidth]{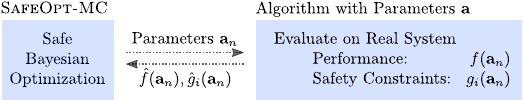}
\caption{Overview of the algorithm. At each iteration~$n$, the algorithm selects safe and informative parameters at which the performance and the safety constraints are evaluated on the real robot. Based on the noisy information gained, the algorithm updates its belief over the functions. This safe optimization process is iterated until the safely reachable, optimal parameters are found.}
\label{fig:optimization_illustration}
\end{figure}

Safety and the ability to operate within constraints imposed by an environment are critical prerequisites for any algorithm that is applied on a real robotic system. Especially in robotics, where systems often face large prior uncertainties, failures can cause serious damage to the robot and its environment~\citep{Schaal2010Learning}. To avoid unsafe behavior, safety is typically guaranteed with respect to a model of the robot's dynamics and environment. When accurate models are not available or when the robotic system contains elements that are difficult to model, such as computer vision components, the parameters of the algorithms are either tuned manually in experiments on the real system or tuned based on massive amounts of experimental data~\citep{Lillicrap2015Continuous}. Both methods are time-consuming and potentially safety-critical: the engineer must either carefully select parameters that are safe or collect enough representative data that leads to safe behavior.

In this paper, we present a method to automatically optimize parameters of robotics algorithms while respecting safety constraints during the optimization. The resulting algorithm can be used to optimize parameters on the real robot without failures, since no unsafe parameters are evaluated during the optimization.
We expand the theoretical framework of~\textsc{SafeOpt} (Safe Optimization) by~\cite{Sui2015Safe} to the more general setting with multiple constraints. We show that our algorithm,~\textsc{SafeOpt-MC}~(for multiple constraints), enjoys strong theoretical guarantees about safety and performance, and works well in practice.

\paragraph{Related work}
In control theory, guaranteeing safety in the presence of unmodeled dynamics is often interpreted as the problem of ensuring stability of  the  underlying
control  law  with  respect  to  an  uncertain dynamic system model~\citep{Zhou1998Essentials}. In this setting, controllers can be gradually improved by estimating the unmodeled dynamics and updating the control law based on this estimate. Safety can be guaranteed by ensuring that either the controller is robustly stable for all possible models within the uncertainty specification~\citep{Berkenkamp2015Safe,Berkenkamp2017Safe} or the system never leaves a safe subset of the state space~\citep{Ostafew2016Robust,Aswani2013Provably,Akametalu2014Reachability,Moldovan2012Safe,Turchetta2016Safe}. Both methods require a system model and uncertainty specification to be known~\textit{a priori}, which must be accurate enough to guarantee stability. In contrast, in our setting we do not assume to have access to a model of the system, but aim to directly optimize the parameters of a control algorithm, without violating safety constraints.

In the robotics literature, optimization algorithms have previously been applied with the goal of maximizing a user-defined performance function through iterative experiments. This is especially powerful when no prior model of the robot is available. However, typical algorithms in the literature do not consider safety of the optimization process, and make other restrictive assumptions such as requiring gradients~\citep{Killingsworth2006PID,Astrom1993Automatic}, which are difficult to obtain from noisy data, or an impractical number of experiments~\citep{Davidor1991Genetic}.

The objective of learning optimal policies has been extensively studied in the reinforcement learning community~\citep{Sutton1998Reinforcement}. In particular, the area of policy search considers the objective of optimizing the parameters of control algorithms~\citep{Kober2014Reinforcement}. The state of the art methods are based on estimating the gradients of the performance function~\citep{Peters2006Policy,Peters2008Reinforcement}. As a result, typically multiple evaluations of very similar parameters are conducted on the real system in order to estimate the gradients, which means the approaches are often not data-efficient and converge to local optima. Safety in gradient-based policy search has only been considered by disallowing large steps along the gradient into areas of the parameter space that have not been explored before~\citep{Achiam2017Constrained}. Guarantees there hold only in expectation. In contrast, our method is gradient-free, so that it can explore the parameter space globally in a more data-efficient manner. At the same time, we provide high-probability worst-case guarantees for not violating safety constraints during the optimization process.

One class of optimization algorithms that has been successfully applied to robotics is Bayesian optimization~\citep{Mockus2012Bayesian}. In Bayesian optimization, rather than considering the objective function as a black-box about which we can only obtain point-wise information, regularity assumptions are made. These are used to actively learn a model of the objective function. The resulting algorithms are practical and provably find the global optimum of the objective function while evaluating the function at only few parameters~\citep{Bull2011Convergence,Srinivas2012Gaussian}. Bayesian optimization methods often model the unknown function as a Gaussian process (GP)~\citep{Rasmussen2006Gaussian}. These models are highly flexible, allow to encode as much prior knowledge as desired, and explicitly model noise in the performance function evaluations. The GP models are used to guide function evaluations to locations that are informative about the optimum of the unknown function~\citep{Mockus2012Bayesian,Jones2001Taxonomy}.
Example applications of Bayesian optimization in robotics include gait optimization of legged robots~\citep{Calandra2014Bayesian,Lizotte2007Automatic} and the optimization of the controller parameters of a snake-like robot~\citep{Tesch2011Using}. \citet{Marco2017Virtual} optimize the weighting matrices of an LQR controller for an inverted pendulum by exploiting additional information from a simulator. Several different Bayesian optimization methods are compared by~\cite{Calandra2014experimental} for the case of bipedal locomotion. While these examples illustrate the potential of Bayesian optimization methods in robotics, none of these examples explicitly considers safety as a requirement.

Recently, the concept of constraints has been incorporated into Bayesian optimization. \cite{Gelbart2014Bayesian} introduce an algorithm to optimize an unknown function subject to an unknown constraint. However, this constraint was not considered to be safety-critical; that is, the algorithm is allowed to evaluate unsafe parameters. The case of finding a safe subset of the parameters without violating safety constraints was considered by~\cite{Schreiter2015Safe}, while~\cite{Sui2015Safe} presented a similar algorithm to safely optimize an objective function. However, the algorithm of~\cite{Sui2015Safe} considers safety as a minimum performance requirement. In robotics, safety constraints are typically functions of the states or inputs that are independent of the performance.

\paragraph{Our contributions}
\looseness -1 In this paper, we present an algorithm that considers multiple, arbitrary safety constraints decoupled from the performance objective. This generalization retains the desirable sample-efficient properties of normal Bayesian optimization, but carefully explores the parameter space in order to maximize performance while guaranteeing safety with high probability. We extend the theory of~\textsc{SafeOpt}~\citep{Sui2015Safe} to account for these additional constraints and show that similar theoretical guarantees can be obtained for the more general setting. We then relax the assumptions used in the proofs to obtain a more practical version of the algorithm and additionally show that the safety guarantees carry over to this case. Next to the theoretical contributions, the second main contribution is an {\em extensive experimental evaluation} of the method, where we consider the problem of safely optimizing linear and nonlinear laws on a quadrotor vehicle. The experiments demonstrate that the proposed approach is able to safely optimize parameters of a control algorithms while respecting safety constraints with high probability. Moreover, we show how ideas from context-based optimization~\citep{Krause2011Contextual} can be used to safely transfer knowledge in order to obtain environment-dependent control laws. For example, in our experiments we optimize a control law for different flying speeds of a quadrotor vehicle. Early results with only a safety constraint on performance and without additional theoretical results were presented in~\citep{Berkenkamp2016Safe}.

The rest of the paper is structured as follows: in~\cref{sec:problem_statement}, we define the problem of safely optimizing the parameters of control algorithms and provide an overview on GPs and Bayesian optimization in~\cref{sec:background}. In~\cref{sec:safe_opt_adapted}, we introduce our algorithm, and analyze its theoretical properties in~\cref{sec:theoretical_results}. We evaluate the performance of the algorithm in experiments on a quadrotor vehicle in~\cref{sec:results} and draw conclusions in~\cref{sec:conclusion}. The proofs of the theorems are provided in~\cref{sec:proofs}.


\section{Problem Statement}
\label{sec:problem_statement}

We consider a given algorithm that is used to accomplish a certain task with a robot. In general, this algorithm is arbitrary and may contain several components including vision, state estimation, planning, and control laws. The algorithm depends on tuning parameters~${\mb{a} \in \mathcal{A} }$ in some specified, domain~$\mathcal{A} \subseteq \mathbb{R}^d$.

The goal is to find the parameters within~$\mathcal{A}$ that maximize a given, scalar performance measure,~$f$. For example, this performance measure may represent the negative tracking error of a robot~\citep{Berkenkamp2016Safe}, the average walking speed of a bipedal robot~\citep{Calandra2014Bayesian}, or any other quantity that can be computed over a finite time horizon. We can only evaluate the performance measure for any parameter set~$\mb{a}$ on finite-time trajectories from experiments on the real robot. The functional dependence of~$f$ on~$\mb{a}$ is not known \textit{a priori}. In the following, we write the performance measure as a function of the parameters~$\mb{a}$,~${ f \colon \mathcal{A} \to \mathbb{R} }$, even though measuring performance requires an experiment on the physical robot and typically depends on a trajectory of states, control inputs, and external signals.

We assume that the underlying system is safety-critical; that is, there are constraints that the system must respect when evaluating parameters. Similarly to the performance measure,~${ f(\mb{a}) }$, these constraints can represent any quantity and may depend on states, inputs, or even environment variables. There are~$q$ safety constraints of the form~${ g_i(\mb{a}) \geq 0,\, g_i\colon \mathcal{A} \to \mathbb{R}, \,i=1 \dots q}$, which together define the safety conditions. This is without loss of generality, since any constraint function can be shifted by a constant in order to obtain this form. The functions~$g_i$ are unknown~\textit{a priori} but can be estimated through (typically noisy) experiments for a given parameter set~$\mb{a}$. For example, in order to encode a state constraint on an obstacle for a robot, the safety function~$g_i(\mb{a})$ can return the smallest distance to the obstacle along a trajectory of states when using algorithm parameters~$\mb{a}$. Note that if the functions were known in advance, we could simply exclude unsafe parameters from the set~$\mathcal{A}$.

The overall optimization problem can be written as
\begin{equation}
\max_{\mb{a} \in \mathcal{A}} f(\mb{a}) ~~\textnormal{subject to}~~ g_i(\mb{a}) \geq 0 \, \forall \, i=1,\dots,q \textnormal{.}
\label{math:global_optimization_problem}
\end{equation}
The goal is to iteratively find the global maximum of this constrained optimization problem by, at each iteration~$n$, selecting parameters~$\mb{a}_n$ and evaluating (up to noise) the corresponding function values~$f(\mb{a}_n)$ and $g_i(\mb{a}_n)$ until the optimal parameters are found. In particular, since the constraints define the safety of the underlying system, only parameters that are inside the feasible region of~\cref{math:global_optimization_problem} are allowed to be evaluated; that is, only parameters that fulfill these safety requirements on the real system.

Since the functions~$f$ and~$g_i$ in~\cref{math:global_optimization_problem} are unknown \textit{a priori}, it is not generally possible to solve the corresponding optimization problem without violating the constraints. The first problem is that we do not know how to select a first, safe parameter to evaluate. In the following, we assume that an initial safe set of parameters~${S_0 \subseteq \mathcal{A}}$ is known for which the constraints are fulfilled. These serve as a starting point for the exploration of the safe region in~\cref{math:global_optimization_problem}. In robotics, safe initial parameters with poor performance can often be obtained from a simulation or domain knowledge.

Secondly, in order to safely explore the parameter space beyond~$S_0$, we must be able to infer whether parameters~$\mb{a}$ that we have not evaluated yet are safe to use on the real system. To this end, we make regularity assumptions about the functions~$f$ and~$g_i$ in~\cref{math:global_optimization_problem}. We discuss these assumptions in more detail in~\cref{sec:theoretical_results}. However, broadly speaking we make assumptions that allow us to model the functions~$f$ and~$g_i$ as a GP, construct reliable confidence intervals over the domain~$\mathcal{A}$, and imply Lipschitz continuity properties. Using these properties, we are able to generalize safety beyond the initial, safe parameters~$S_0$. Given the model assumptions, we require that the safety constraints hold with high probability over the entire sequence of experiments.

As a consequence of the safety requirements, it is not generally possible to find the global optimum of~\cref{math:global_optimization_problem}. Instead we aim to find the optimum in the part of the feasible region that is safely reachable from~$S_0$. We formalize this precisely in~\cref{sec:safe_opt_adapted}.

Lastly, whenever we evaluate parameters on the real system, we only obtain noisy estimates of both the performance function and the constraints, since both depend on noisy sensor data along trajectories. That is, for each parameter~$\mb{a}$ the we evaluate, we obtain measurements~${\hat{f}(\mb{a}) = f(\mb{a}) + \omega_0}$ and~${\hat{g}_i(\mb{a}) = g_i(\mb{a}) + \omega_{i}}$, where~${\omega_{i},\,i=0,\dots,q}$, is zero-mean, $\sigma$-sub-Gaussian noise. Note that while~$\hat{f}(\mb{a})$ is a random variable, we use~$\hat{f}(\mb{a}_n)$ to denote the measurement obtained at iteration~$n$. In general, the noise variables may be correlated, but we do not consider this case in our theoretical analysis in~\cref{sec:theoretical_results}. We only want to evaluate parameters where all safety constraints are fulfilled, so that~${g_i(\mb{a}_n) \geq 0}$ for all~$i \in \{1,\dots,q\}$ and~$n \geq 1$.


\section{Background}
\label{sec:methodology}
\label{sec:background}

In this section, we review Gaussian processes (GPs) and Bayesian optimization, which form the foundation of our safe Bayesian optimization algorithm in~\cref{sec:safe_opt_adapted}. The introduction to GPs is standard and based  on~\citep{Berkenkamp2016Safe} and~\citep{Rasmussen2006Gaussian}.

\subsection{Gaussian Process (GP)}
\label{sec:gaussian_process}

Both the function~$f(\mb{a})$ and the safety constraints~$g_i(\mb{a})$ in~\cref{sec:problem_statement} are unknown \textit{a priori}. We use GPs as a nonparametric model to approximate these unknown functions over their common domain~$\mathcal{A}$. In the following, we focus on a single function, the performance function. We extend this model to multiple functions in order to represent both performance and constraints in~\cref{sec:multi_gps}.

GPs are a popular choice for nonparametric regression in machine learning, where the goal is to find an approximation of a nonlinear map,~${f(\mb{a}): \mathcal{A} \to \mathbb{R}}$, from an input vector~${\mb{a} \in \mathcal{A} }$ to the function value~$f(\mb{a})$. This is accomplished by assuming that the function values $f(\mb{a})$, associated with different values of $\mb{a}$, are random variables and that any finite number of these random variables have a joint Gaussian distribution~\citep{Rasmussen2006Gaussian}.

A GP is parameterized by a prior mean function and a covariance function~$k(\mb{a}, \mb{a}')$, which defines the covariance of any two function values~$f(\mb{a})$ and $f(\mb{a}')$, ${\mb{a}, \mb{a}' \in \mathcal{A}}$.
The latter is also known as the kernel. In this work, the mean is assumed to be zero, without loss of generality. The choice of kernel function is problem-dependent and encodes assumptions about smoothness and rate of change of the unknown function. A review of potential kernels can be found in~\citep{Rasmussen2006Gaussian} and more information about the kernels used in this paper is given in~\cref{sec:results}.

The GP framework can be used to predict the function value~${f(\mb{a}^*)}$ for an arbitrary parameter~${ \mb{a}^* \in \mathcal{A} }$ based on a set of~$n$ past observations, ${ \{ \hat{f}(\mb{a}_i) \}_{i=1}^n}$, at the chosen parameters~${\mathcal{D}_n = \{ \mb{a}_i \}_{i=1}^n}$. The GP model assumes that observations are noisy measurements of the true function value~${f(\mb{a})}$; that is,~${\hat{f}(\mb{a}) = f(\mb{a}) + \omega}$ with ${\omega \sim \mathcal{N}(0,\sigma^2)}$. Conditioned on these observations, the posterior distribution is a GP again with mean and variance
\begin{align}
\mu_n(\mb{a}^*) &= \mb{k}_n(\mb{a}^*)    (\mb{K}_n + \mb{I}_n \sigma^2)^{-1} \hat{\mb{f}}_n ,
\label{math:gp_prediction_mean} \\
\sigma^2_n(\mb{a}^*) &= k(\mb{a}^*,\mb{a}^*) - \mb{k}_n(\mb{a}^*) (\mb{K}_n + \mb{I} \sigma^2)^{-1} \mb{k}_n^\T(\mb{a}^*),
\label{math:gp_prediction_variance}
\end{align}
where~${\hat{\mb{f}}_n = \left[ \begin{matrix}
\hat{f}(\mb{a}_1),\dots,\hat{f}(\mb{a}_n)
\end{matrix} \right] ^\T}$ is the vector of observed, noisy function values,
the covariance matrix~${\mb{K}_n \in \mathbb{R}^{n \times n}}$ has entries ${[\mb{K}_n]_{(i,j)} = k(\mb{a}_i, \mb{a}_j)}$, ${i,j\in\{1,\dots,n\}}$, and
the vector
${\mb{k}_n(\mb{a}^*) =
\left[ \begin{matrix}
	k(\mb{a}^*,\mb{a}_1),\dots,k(\mb{a}^*,\mb{a}_n)
\end{matrix}  \right]}$
contains the covariances between the new input~$\mb{a}^*$ and the observed data points in~$\mathcal{D}_n$.
The matrix~${ \mb{I}_n \in \mathbb{R}^{n \times n} }$ denotes the identity matrix.

\subsubsection{GPs with multiple outputs}
\label{sec:multi_gps}

So far, we have focused on GPs that model a single scalar function. In order to model not only the performance,~$f(\mb{a})$, but also the safety constraints,~$g_i(\mb{a})$, we have to consider multiple, possibly correlated functions. In the GP literature, these are usually treated by considering a matrix of kernel functions, which models the correlation between different functions~\citep{Alvarez2012Kernels}. Here instead, we use an equivalent representation by considering a surrogate function,
\begin{equation}
h(\mb{a}, i) =
\begin{cases}
f(\mb{a})  \quad \textnormal{if $i=0$} \\
g_i(\mb{a})  \quad \textnormal{if $i \in \mathcal{I}_g$},
\end{cases}
\label{math:surrogate}
\end{equation}
which returns either the performance function or the individual safety constraints depending on the additional input~${i \in \mathcal{I}}$ with~${\mathcal{I}=\{0,\dots,q\}}$, where~${\mathcal{I}_g = \{1, \dots, q\} \subset \mathcal{I}}$ are the indices belonging to the constraints. The function~$h(\cdot, \cdot)$ is a single-output function and can be modeled as a GP with scalar output over the extended parameter space~$\mathcal{A} \times \mathcal{I}$. For example, the kernel for the performance function~$f(\mb{a})$ and one safety constraint~$g(\mb{a})$ may look like this:
\begin{align}
k((\mb{a}, i),\, &(\mb{a}', j)) = \begin{cases}
\delta_{ij}\, k_f(\mb{a}, \mb{a}') + k_{fg}(\mb{a}, \mb{a}') ~ \textnormal{ if $i=0$} \\
\delta_{ij}\, k_g(\mb{a}, \mb{a}') + k_{fg}(\mb{a}, \mb{a}') ~ \textnormal{ if $i=1$},
\end{cases} \notag
\end{align}
where~$\delta_{ij}$ is the Kronecker delta. This kernel models the functions~$f(\mb{a})$ and~$g(\mb{a})$ with independent kernels~$k_f$ and~$k_g$ respectively, but also introduces a covariance function~$k_{fg}$ that models similarities between the two function outputs. By extending the training data by the extra parameter~$i$, we can use the normal GP framework and predict function values and corresponding uncertainties using~\cref{math:gp_prediction_mean,math:gp_prediction_variance}. When observing the function values, the index~$i$ is added to the parameter set~$\mb{a}$ for each observation. Including noise parameters inside the kernel allows to model noise correlation between the individual functions.

Importantly, using this surrogate function rather than the framework of~\cite{Alvarez2012Kernels} enables us to lift theoretical results of~\cite{Sui2015Safe} to the more general case with multiple constraints and provide theoretical guarantees for our algorithm in~\cref{sec:theoretical_results}.

In the setting with multiple outputs, at every iteration~$n$, we obtain~${|\mathcal{I}| = q + 1}$ measurements; one for each function. For ease of notation, we continue to write~$\mu_n$ and $\sigma_n$, even though we have obtained~$n \cdot (q + 1)$ measurements at locations~$\mathcal{D}_n \times \mathcal{I}$ in the extended parameter space.

\subsection{Bayesian Optimization}
\label{sec:bayesian_optimization}

Bayesian optimization aims to find the global maximum of an unknown function~\citep{Mockus2012Bayesian}. The framework assumes that evaluating the function is expensive, while computational resources are relatively cheap. This fits our problem in~\cref{sec:problem_statement}, where each evaluation of the performance function corresponds to an experiment on the real system, which takes time and causes wear in the robotic system.

In general, Bayesian optimization models the objective function as a random function and uses this model to determine informative sample locations. A popular approach is to model the underlying function as a GP, see~\cref{sec:gaussian_process}.
GP-based methods use the posterior mean and variance predictions in~\cref{math:gp_prediction_mean,math:gp_prediction_variance} to compute the next sample location.
For example, according to the~\textsc{GP-UCB} (GP-Upper Confidence Bound) algorithm by~\cite{Srinivas2012Gaussian}, the next sample location is
\begin{equation}
\mb{a}_n = \underset{\mb{a} \in \mathcal{A}}{\mathrm{argmax}}~ \mu_{n-1}(\mb{a}) + \beta_n^{1/2} \sigma_{n-1}(\mb{a}),
\label{math:gp_ucb}
\end{equation}
where~${\beta_n}$ is an iteration-dependent scalar that reflects the confidence interval of the GP.
Intuitively,~\cref{math:gp_ucb} selects new evaluation points at locations where the upper bound of the confidence interval of the GP estimate is maximal.
Repeatedly evaluating the system at locations given by~\cref{math:gp_ucb} improves the mean estimate of the underlying function and decreases the uncertainty at candidate locations for the maximum, such that the global maximum is provably found eventually~\citep{Srinivas2012Gaussian}.

While~\cref{math:gp_ucb} is also an optimization problem, its solution does not require any evaluations on the real system and only uses the GP model. This reflects the assumption of cheap computational resources. In practice, Bayesian optimization typically focuses on low-dimensional problems. However, this can be scaled up by discovering a low-dimensional subspace of~$\mathcal{A}$ for Bayesian optimization~\citep{Djolonga2013HighDimensional,Wang2013Bayesian} or encoding additional structure in the kernel~\citep{Duvenaud2011Additive}.

\subsection{Contextual Bayesian Optimization}
\label{sec:contextual_bayesian_optimization}

Contextual Bayesian optimization is a conceptually straightforward extension of Bayesian optimization~\citep{Krause2011Contextual}. It enables optimization of functions that depend on additional, external variables, which are called contexts. For example, the performance of a robot may depend on its battery level or the weather conditions, both of which cannot be influenced directly. Alternatively, contexts can also represent different tasks that the robot has to solve, which are specified externally by a user. The idea is to include the functional dependence on the context in the GP model, but to consider them fixed when selecting the next parameters to evaluate.

For example, given a context~${\mb{z} \in \mathcal{Z}}$ that is fixed by the environment, we can model how the performance and constraint functions change with respect to different contexts by multiplying the kernel function~$k_a$ over the parameters, with another kernel~${k_z \colon \mathcal{Z} \times \mathcal{Z} \to \mathbb{R}}$ over the contexts,
\begin{equation}
k((\mb{a}, i, \mb{z}), (\mb{a}', i', \mb{z}')) = k_a((\mb{a}, i), (\mb{a}', i')) \cdot k_z(\mb{z}, \mb{z}').
\label{eq:context_kernel}
\end{equation}
This kernel structure implies that function values are correlated when both parameters and the contexts are similar. For example, we would expect selecting the same parameters~$\mb{a}$ for a control algorithm to lead to similar performance values if the context (e.g., the battery level) is similar.

Since contexts are not part of the optimization criterion, a modified version of~\cref{math:gp_ucb} has to be used. It was shown by~\cite{Krause2011Contextual} that an algorithm that evaluates the GP-UCB criterion given a fixed context~$\mb{z}_n$,
\begin{equation}
\mb{a}_n = \underset{\mb{a} \in \mathcal{A}}{\mathrm{argmax}}~ \mu_{n-1}(\mb{a}, \mb{z}_n) + \beta_n^{1/2} \sigma_{n-1}(\mb{a}, \mb{z}_n),
\label{eq:contextual_gp_ucb}
\end{equation}
enjoys similar convergence guarantees as normal Bayesian optimization in~\cref{sec:bayesian_optimization}. Specifically, after seeing a particular context often enough, the criterion~\cref{eq:contextual_gp_ucb} will query parameters that are close-to-optimal.

\subsection{Safe Bayesian Optimization (\textsc{SafeOpt})}
\label{sec:safe_opt}

In this paper, we extend the safe optimization algorithm \textsc{SafeOpt}~\citep{Sui2015Safe} to multiple constraints.
\textsc{SafeOpt} is a Bayesian optimization algorithm, see~\cref{sec:bayesian_optimization}.
However, instead of optimizing the underlying performance function~$f(\mb{a})$ globally, it restricts itself to a safe set of parameters that achieve a certain minimum performance with high probability. This safe set is not known initially, but is estimated after each function evaluation.
In this setting, the challenge is to find an appropriate evaluation strategy similar to~\cref{math:gp_ucb}, which at each iteration~$n$ not only aims to find the global maximum within the currently known safe set (exploitation), but also aims to increase the set of controllers that are known to be safe (exploration).
\textsc{SafeOpt} trades off between these two sets by choosing for the next experiment the parameters inside the safe set about whose performance we are the most uncertain.


\section{\textsc{SAFEOPT-MC} (Multiple Constraints)}
\label{sec:safe_opt_adapted}

In this section, we introduce the~\textsc{SafeOpt-MC} algorithm for multiple constraints and discuss its theoretical properties. The goal of the algorithm is to solve~\cref{math:global_optimization_problem} by evaluating different parameters from the domain~$\mathcal{A}$ without violating the safety constraints. To this end, any algorithm has to consider two important properties:
\begin{enumerate}[label=(\roman*)]
\item Expanding the region of the optimization problem that is known to be feasible or safe as much as possible without violating the constraints,
\item Finding the optimal parameters within the current safe set.
\end{enumerate}
For objective~$\textit{i)}$, we need quantify the size of the safe set. To do this in a tractable manner, we focus on finite sets~$\mathcal{A}$ in the following. However, heuristic extensions to continuous domains exist~\citep{Duivenvoorden2017Constrained}.

The theoretical guarantees of the algorithm rely on the continuity of the underlying function. Many commonly used kernels, such as the squared exponential
(Gaussian) kernel, encode Lipschitz-continuous functions with high probability~\citep{Ghosal2006Posterior}. We make more specific assumptions that ensure deterministic Lipschitz constants in~\cref{sec:theoretical_results}. For now, we assume that~$f(\mb{a})$ and~$g_i(\mb{a})$ are Lipschitz continuous  with Lipschitz constant~$L$ with respect to some norm
\footnote{The functions~$f$ and~$g_i$ can have different Lipschitz constants~$L_i$, but we assume a global Lipschitz constant for ease of notation. Additionally, the theoretical results transfer equivalently to the case of Lipschitz-continuity with respect to some metric.}.

Since we only observe noisy estimates of both the performance function and the constraints, we cannot expect to find the entire safe region encoded by the constraints within a finite number of evaluations. Instead, we follow~\cite{Sui2015Safe} and consider learning the safety constraint up to some accuracy~$\epsilon$. This assumption is equivalent to a minimum slack of~$\epsilon$ on the constraints in~\cref{math:global_optimization_problem}.

As mentioned in~\cref{sec:problem_statement}, we assume that we have access to initial, safe parameters~${S_0 \subseteq \mathcal{A}}$, for which we know that the safety constraints are satisfied~\textit{a priori}.
Starting from these initial parameters, we ask what the best that any safe optimization algorithm could hope to achieve is. In particular, if we knew the safety constraint functions~$g_i(\cdot)$ up to~$\epsilon$ accuracy within some safe set of parameters~$S$, we could exploit the continuity properties to expand the safe set to
\begin{equation}
\begin{aligned}
&R_\epsilon (S) \coloneqq S \cup \\
&\quad  \bigcap_{i \in \mathcal{I}_g} \left\{ \mb{a} \in \mathcal{A} \, | \, \exists \, \mb{a}' \in S: \,g_i(\mb{a}') - \epsilon - L \|\mb{a}\hspace{-0.2em} -\hspace{-0.2em} \mb{a}'\| \geq 0 \right\},
\end{aligned}
\label{math:baseline_reachable_set}
\end{equation}
where~$R_\epsilon(S)$ represents the number of parameters that can be classified as safe given that we know~$g$ up to $\epsilon$-error inside~$S$ and exploiting the Lipschitz continuity to generalize to new parameters outside of~$S$. The baseline that we compare against is the limit of repeatedly applying this operator on~$S_0$; that is, with~${R^n_\epsilon(S) = R_\epsilon(R^{n-1}_\epsilon(S)) }$ and ${R^1_\epsilon(S) = R_\epsilon(S)}$ the baseline is~${\bar{R}_\epsilon(S_0) \coloneqq \lim_{n \to \infty} R_\epsilon^n(S_0)}$. This set contains all the parameters in~$\mathcal{A}$ that could be classified as safe starting from~$S_0$ if we knew the function up to~$\epsilon$ error. This set does not include all the parameters that potentially fulfill the constraints in~\cref{math:global_optimization_problem}, but is the best we can do without violating the safety constraints. Hence the optimal value that we compare against is not the one in~\cref{math:global_optimization_problem}, but
\begin{equation}
f_\epsilon^* = \max_{\mb{a} \in \bar{R}_\epsilon(S_0)} f(\mb{a}),
\label{math:baseline_max_value}
\end{equation}
which is the maximum performance value over the set that we could hope to classify as safe starting from the initial safe set,~$S_0$.

\subsection{The Algorithm}
\label{sec:algorithm}

\begin{figure*}[t]
\centering
\subfloat{\includegraphics[scale=1]{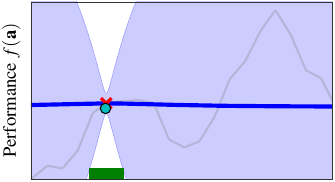} \label{fig:set_example_initial} } \hfill
\subfloat{\includegraphics[scale=1]{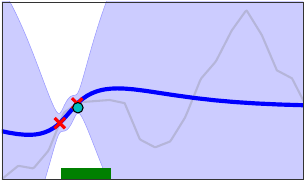} \label{fig:set_example_intermediate}} \hfill
\subfloat{\includegraphics[scale=1]{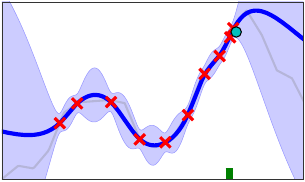} \label{fig:set_example_final}} \\
\subfloat[Initial, safe parameters.]{\includegraphics[scale=1]{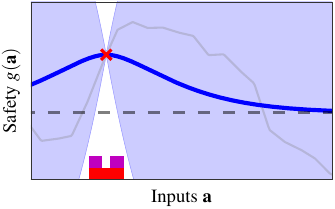} \label{fig:set_example_initial_safe} } \hfill
\subfloat[Safe exploration.]{\includegraphics[scale=1]{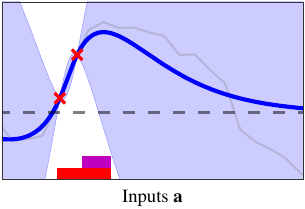} \label{fig:set_example_intermediate_safe}} \hfill
\subfloat[After 10 evaluations: safe maximum found.]{\includegraphics[scale=1]{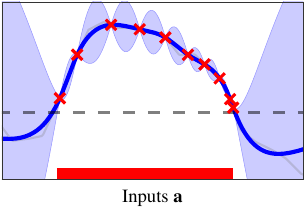} \label{fig:set_example_final_safe}}
\caption{Optimization with the \textsc{SafeOpt-MC} algorithm after 1, 2 and 10 parameter evaluations. Based on the mean estimate (blue) and the $2\sigma$ confidence interval (light blue), the algorithm selects evaluation points for which~$g(\mb{a}) \geq 0$ (black dashed) from the safe set~$S_n$ (red), which are either potential maximizers~$M_n$ (green) or expanders~$G_n$ (magenta). It then learns about the function by drawing noisy samples from the unknown, underlying function (light gray). This way, we expand the safe region (red) as much as possible and, simultaneously, find the global optimum of the unknown function~\cref{math:safeopt_pessimistic_estimate} (cyan circle).}
\label{fig:set_example}
\end{figure*}

In this section, we present the \textsc{SafeOpt-MC} algorithm that guarantees convergence to the previously set baseline. The most critical aspect of the algorithm is safety. However, once safety is ensured, the second challenge is to find an evaluation criterion that enables trading off between exploration, trying to further expand the current estimate of the safe set, and exploitation, trying to improving the estimate of the best parameters within the current set.

To ensure safety, we construct confidence intervals that contain the true functions~$f$ and~$g_i$ with high probability. In particular, we use the posterior GP estimate given the data observed so far. The confidence intervals for the surrogate function in~\cref{math:surrogate} are defined as
\begin{equation}
Q_n(\mb{a}, i) \coloneqq \left[ \mu_{n-1}(\mb{a}, i) \pm \beta_n^{1/2} \sigma_{n-1}(\mb{a}, i) \right],
\label{math:gp_confidence_intervals}
\end{equation}
where~$\beta_n$ is a scalar that determines the desired confidence interval. This set contains all possible function values between the lower and upper confidence interval based on the GP posterior. The probability of the true function value lying within this interval depends on the choice of~$\beta_n$, as well as on the assumptions made about the functions. We provide more details about this choice in~\cref{sec:theoretical_results},~\cref{thm:confidence_interval}, and~\cref{sec:practical_algorithm}.

Rather than defining the lower and upper bounds based on~\cref{math:gp_confidence_intervals}, the following analysis requires that consecutive estimates of the lower and upper bounds are contained within each other. This assumption ensures that the safe set does not shrink from one iteration to the next, which we require to prove our results. We relax this assumption in~\cref{sec:practical_algorithm}. We define the contained set at iteration~$n$ as~${C_n(\mb{a}, i) = C_{n-1}(\mb{a}, i) \cap Q_n(\mb{a}, i)}$, where~${C_0(\mb{a}, i)}$ is~${[0, \infty]}$ for all~${\mb{a} \in S_0}$ and ~$\mathbb{R}$ otherwise. This ensures that parameters in the initial safe set~$S_0$ remain safe according to the GP model after additional observations. The lower and upper bounds on this set are defined as~${l^i_n(\mb{a}) \coloneqq \min C_n(\mb{a}, i)}$ and~${u^i_n(\mb{a}) \coloneqq \max C_n(\mb{a}, i)}$, respectively. For notational clarity, we write~${l^f_n(\mb{a}) \coloneqq l^0_n(\mb{a})}$ and~${u^f_n(\mb{a}) \coloneqq u^0_n(\mb{a})}$ for the performance bounds.

Based on these confidence intervals for the function values and a current safe set~$S_{n-1}$, we can enlargen the safe set using the Lipschitz continuity properties,
\begin{equation}
S_n = \bigcap_{i \in \mathcal{I}_g} \bigcup_{\mb{a} \in S_{n-1}} \left\{ \mb{a}' \in \mathcal{A} \, | \, l_n^i(\mb{a}) - L \|\mb{a}\hspace{-0.2em} -\hspace{-0.2em} \mb{a}'\| \geq 0 \right\}.
\end{equation}
The set~$S_n$ contains all points in~$S_{n-1}$, as well as all additional parameters that fulfill the safety constraints given the GP confidence intervals and the Lipschitz constant.

With the set of safe parameters defined, the last remaining challenge is to trade off between exploration and exploitation. One could, similar to~\cite{Schreiter2015Safe}, simply select the most uncertain element over the entire set. However, this approach is not sample-efficient, since it involves learning about the entire function rather than restricting evaluations to the relevant parameters. To avoid this, we first define subsets of~$S_n$ that correspond to parameters that could either improve the estimate of the maximum or could expand the safe set. The set of potential maximizers is defined as
\begin{equation}
M_n \coloneqq \left\{ \mb{a} \in S_n \,|\, u^f_n(\mb{a}) \geq \max_{\mb{a}' \in S_n} l^f_n(\mb{a}') \right\},
\label{math:maximizer_set}
\end{equation}
which contains all parameters for which the upper bound of the current performance estimate is above the best lower bound. The parameters in~$M_n$ are candidates for the optimum, since they could obtain performance values above the current conservative estimate of the optimal performance.

Similarly, an optimistic set of parameters that could potentially enlarge the safe set is
\begin{align}
G_n &\coloneqq \left\{ \mb{a} \in S_n \,|\, e_n(\mb{a}) > 0 \right\}, \label{math:expander_set} \\
e_n(\mb{a}) &\coloneqq \big| \big\{ \mb{a}' \in \mathcal{A} \setminus S_n \,|\, \exists  i \in \mathcal{I}_g \colon
\label{math:expander_function} \\
&\phantom{\coloneqq \big| \big\{ \mb{a}' \in \mathcal{A} \setminus S_n \,|\,}
u^i_n(\mb{a}) - L \|\mb{a} - \mb{a}'\| \geq 0 \big\} \big| .  \notag
\end{align}
The function~$e_n$ enumerates the number of parameters that could additionally be classified as safe if a safety function obtained a measurement equal to its upper confidence bound. Thus, the set~$G_n$ is an optimistic set of parameters that could potentially expand the safe set.

We trade off between the two sets,~$M_n$ and~$G_n$, by selecting the most uncertain element across all performance and safety functions; that is, at each iteration~$n$ we select
\begin{align}
\mb{a}_n &= \argmax_{\mb{a} \in G_n \cup M_n} \max_{i \in \mathcal{I}} \, w_n(\mb{a}, i),
\label{math:safeopt_selection_criterion}\\
w_n(\mb{a}, i) &= u^i_n(\mb{a}) - l^i_n(\mb{a}) \label{math:safeopt_uncertainty}
\end{align}
as the next parameter set to be evaluated on the real system. The implications of this selection criterion will become more apparent in the next section, but from a high-level view this criterion leads to a behavior that focuses almost exclusively on exploration initially, as the most uncertain points will typically lie on the boundary of the safe set for many commonly used kernels. This changes once the constraint evaluations return results closer to the safety constraints. At this point, the algorithm keeps switching between selecting parameters that are potential maximizers, and parameters that could expand the safe set and lead to new areas in the parameter space with even higher function values. Pseudocode for the algorithm is found in~\cref{alg:safe_opt}.

We show an example run of the algorithm in~\cref{fig:set_example}. It starts from an initial safe parameter~$\mb{a}_0 \in S_0$ at which we obtain a measurement in~\cref{fig:set_example_initial_safe}. Based on this, the algorithms uses the continuity properties of the safety function and the GP in order to determine nearby parameters as safe (red set). This corresponds to the region where the high-probability confidence intervals of the GP model (blue shaded) are above the safety threshold (grey dashed line). At the next iteration in~\cref{fig:set_example_intermediate_safe}, the algorithm evaluates parameters that are close to the boundary of the safe set, in order to expand the set of safe parameters. Eventually the algorithm converges to the optimal parameters in~\cref{fig:set_example_final}, which obtain the largest performance value that is possible without violating the safety constraints. A local optimization approach, e.g. based on estimated gradients\footnote{If gradient information is available, it can be incorporated in the GP model too~\citep{Solak2003Derivative}}, would have gotten stuck in the local optimum at the initial parameter~$\mb{a}_0$.

At any iteration, we can obtain an estimate for the current best parameters from
\begin{equation}
\hat{\mb{a}}_n = \underset{\mb{a} \in S_n}{\mathrm{argmax}}\, l^f_n(\mb{a}),
\label{math:safeopt_pessimistic_estimate}
\end{equation}
which returns the best, safe lower-bound on the performance function~$f$.

\begin{algorithm}[t]
\SetAlgoNoLine
    \caption{\textsc{SafeOpt-MC}}
    \DontPrintSemicolon
    \label{alg:safe_opt}
    \SetKwInOut{Input}{Inputs}
    \Input{Domain $\mathcal{A}$, \newline
           GP prior $k((\mb{a}, i), (\mb{a}', j)),$ \newline
           Lipschitz constant $L$, \newline
           Initial safe set~${S_0 \subseteq \mathcal{A}}$}
    \For{$\ti=1,\dots$}{
        $S_n \gets \underset{i \in \mathcal{I}_g}{\bigcap} \underset{\mb{a} \in S_{n-1}}{\bigcup} \left\{ \mb{a}' \in \mathcal{A} \, | \, l^i_n(\mb{a}) - L \|\mb{a} - \mb{a}'\| \geq 0 \right\}$
        \label{alg:safe_set} \;
        $M_n \gets \left\{ \mb{a} \in S_n \,|\, u^f_n(\mb{a}) \geq \max_{\mb{a}' \in S_n} l^f_n(\mb{a}') \right\}$
        \label{alg:maximizer_set} \;
        $G_n \gets \left\{ \mb{a} \in S_n \,|\, e_n(\mb{a}) \geq 0 \right\}$
        \label{alg:expander_set} \;
        $\mb{a}\ts{\ti} \gets \argmax_{\mb{a} \in G_n \cup M_n} \max_{i \in \mathcal{I}}\, w_n(\mb{a}, i)$
        \label{alg:acquisition} \;
        Measurements:~$\hat{f}(\mb{a}_n),~\hat{g}_i(\mb{a}_n) \,\forall i=0,\dots,q$
        \label{alg:evaluate}\;
        Update GP with new data
        \label{alg:update_gp} \;
        }
\end{algorithm}

\subsection{Theoretical Results}
\label{sec:theoretical_results}

In this section, we show that the same theoretical framework from the~\textsc{SafeOpt} algorithm~\citep{Sui2015Safe} can be extended to multiple constraints and the evaluation criterion~\cref{math:safeopt_selection_criterion}. Here, we only provide the results and high-level ideas of the proofs. The mathematical details are provided in~\cref{sec:proofs}. For simplicity, we assume that all function evaluations are corrupted by the same $\sigma$-sub-Gaussian noise in this section.

In order to provide guarantees for safety, we need the confidence intervals in~\cref{math:gp_confidence_intervals} to hold for all iterations and functions. In the following, we assume that the surrogate function~$h(\mb{a}, i)$ has bounded norm in a reproducing kernel Hilbert space (RKHS, c.f.,~\cite{Christmann2008Support}). A RKHS corresponding to a kernel~$k(\cdot, \cdot)$ includes functions of the form~$h(\mb{a}, i) = \sum_j \alpha_j k((\mb{a}, i), (\mb{a}_j, i_j))$ with~${\alpha_i \in \mathbb{R} }$ and representer points~$(\mb{a}_j, i_j) \in \mathcal{A} \times \mathcal{I}$. The bounded norm property implies that the coefficients~$\alpha_j$ decay sufficiently fast as~$j$ increases. Intuitively, these functions are well-behaved, in that they are regular with respect to the choice of kernel.

The following Lemma allows us to choose a scaling factor~$\beta_n$ for~\cref{math:gp_confidence_intervals}, so that we achieve a specific probability of the true function being contained in the confidence intervals for all iterations.
\begin{restatable}[(based on \citet{Chowdhury2017Kernelized})]{lemma}{confidencethm}
Assume that~${h(\mb{a}, i)}$ has RKHS norm bounded by~$B$ and that measurements are corrupted by~$\sigma$-sub-Gaussian noise. If ${\beta_n^{1/2} = B + 4 \sigma \sqrt{ \gamma_{(n-1)|\mathcal{I}|} + 1 + \mathrm{ln}(1 / \delta)}}$, then the following holds for all parameters~${\mb{a} \in \mathcal{A}}$, function indices~${i \in \mathcal{I}}$, and iterations~${n \geq 1}$ jointly with probability at least~${1 - \delta}$:
\begin{equation}
\big|\, h(\mb{a}, i) - \mu_{n-1}(\mb{a}, i) \,\big| \leq \beta_{n}^{1/2} \sigma_{n-1}(\mb{a}, i).
\end{equation}
\label{thm:confidence_interval}
\end{restatable}
Moreover, if the kernel is continuously differentiable, then the corresponding functions are Lipschitz continuous~\citep{Christmann2008Support}. Note that~\cref{thm:confidence_interval} does not make probabilistic assumptions on $h$ -- in fact, $h$ could be chosen adversarially, as long as it has bounded norm in the RKHS.  Similar results can be be obtained for the Bayesian setting where the function~$h$ is assumed to be drawn from the GP prior~\citep{Srinivas2012Gaussian}.

The scaling factor~$\beta_n$ in \cref{thm:confidence_interval} depends on the {\em information capacity}~$\gamma_n$ associated with the kernel $k$. It is the maximum amount of mutual information that we can obtain about the GP model of~$h(\cdot)$ from $n$ noisy measurements~$\hat{\mb{h}}_\mathcal{D}$ at parameters~${\mathcal{D} = \{(\mb{a}_1, i_1), \dots \}}$,
\begin{equation}
  \gamma_n = \max_{\mathcal{D} \subseteq \mathcal{A} \times \mathcal{I}, |\mathcal{D}| \leq n} \mathrm{I}(\hat{\mb{h}}_\mathcal{D}; h).
  \label{eq:information_capacity}
\end{equation}
Intuitively, it quantifies a best case scenario where we can select the measurements in the most informative manner possible. The information capacity~$\gamma_n$ has a sublinear dependence on~$n$ for many commonly-used kernels and can numerically approximated up to a small constant factor for any given kernel~\cite{Srinivas2012Gaussian}.

Since the confidence intervals hold with probability~${ 1 - \delta}$ and the safe set is not empty starting from~$S_0$, it is possible to prove that parameters within the safe set~$S_n$ are always safe with high probability. In order for the algorithm to compete with our baseline, we must additionally ensure that the algorithm learns the true function up to~$\epsilon$ confidence in both the sets~$M_n$ and~$G_n$. The number of measurements required to achieve this depends on the information capacity~$\gamma_n$, since it encodes how much information can be obtained about the true function from~$n$ measurements. We use the sublinearity of~$\gamma_n$ in order to bound the number of samples required to estimate the function up to~$\epsilon$ accuracy. We have the following result:
\begin{restatable}{theorem}{safeopttheorem}
Assume that~${h(\mb{a}, i)}$ has bounded norm in an RKHS and that the measurement noise is $\sigma$-sub-Gaussian. Also, assume that~${ S_0 \neq \emptyset }$ and~${ g_i(\mb{a}) \geq 0 }$ for all~${ \mb{a} \in S_0 }$ and~${i \in \mathcal{I}_g}$. Choose~$\beta_n$ as in~\cref{thm:confidence_interval}, define~$\hat{\mb{a}}_n$ as in~\cref{math:safeopt_pessimistic_estimate}, and let~$n^*(\epsilon, \delta)$ be the smallest positive integer satisfying
\begin{equation}
\frac{n^*}{\beta_{n^*} \gamma_{|\mathcal{I}| n^*}} \geq \frac{C_1 (|\bar{R}_0(S_0)| + 1)}{\epsilon^2},
\label{eq:thm_safety_n_star}
\end{equation}
where~${C_1 = 8 / \log(1 + \sigma^{-2})}$. For any~${\epsilon > 0}$ and~${\delta \in (0, 1)}$, when running~\cref{alg:safe_opt} the following inequalities jointly hold with probability at least~${1-\delta}$:
\begin{enumerate}
\item {\em Safety:} $\forall n \geq 1, \forall i \in \mathcal{I}_g \colon g_i(\mb{a}_n) \geq 0$
\item {\em Optimality:} $\forall n \geq n^*, \, f(\hat{\mb{a}}_n) \geq f_\epsilon^* - \epsilon$
\end{enumerate}
\label{thm:safeopt_optimality_safety}
\end{restatable}
\begin{proof}
Main idea: safety follows from~\cref{thm:confidence_interval}, since accurate confidence intervals imply that we do not evaluate unsafe parameters. For the optimality, the main idea is that, since we evaluate the most uncertain element in~${M_n \cup G_n}$, the uncertainty about the maximum is bounded by~$w_n(\mb{a}_n, i_n)$. The result follows from showing that, after a finite number of evaluations, either the safe set expands or the maximum uncertainty within~${M_n \cup G_n}$ shrinks to~$\epsilon$. See~\cref{sec:proofs} for derivations and details. \qed
\end{proof}
\cref{thm:safeopt_optimality_safety} states that, given the assumptions we made about the underlying function,~\cref{alg:safe_opt} explores the state space without violating the safety constraints and, after at most~$n^*$ samples, finds an estimate that is $\epsilon$-close to the optimal value over the safely reachable region. The information capacity~$\gamma_{|\mathcal{I}|n^*}$, grows at a faster rate of~$|\mathcal{I}|n$ compared to~$n$ in \textsc{SafeOpt}, since we obtain~$|\mathcal{I}|$ observations at the same parameters set~$\mb{a}$, while the \textsc{SafeOpt} analysis assumes every measurement is optimized independently. However,~${\gamma_{|\mathcal{I}|n}}$ remains sublinear in~$n$, see~\cref{sec:proofs}.

\subsubsection{Contexts}

In this section, we show how the theoretical guarantees derived in the previous section transfer to contextual Bayesian optimization. In this setting,
part of the variables that influence the performance, the contexts, are fixed by an external process that we do not necessarily control. In normal Bayesian optimization, it was shown by~\cite{Krause2011Contextual} that an algorithm that optimizes the GP-UCB criterion in~\cref{eq:contextual_gp_ucb} for a fixed context converges to the global optimum after repeatedly seeing the corresponding context.

Intuitively, the fact that part of the variables that influence the performance, the contexts, are now fixed by an external process should not impact the algorithm on a fundamental level. However, safety is a critical issue in our experiments and, in general, one could always select an adversarial context for which we do not have sufficient knowledge to determine safe controller parameters. As a consequence, we make the additional assumption that only `safe' contexts are visited; that is, we assume the following:
\begin{assumption}
For any~$n\geq1$, the context~$\mb{z}_n \in \mathcal{Z}$ is selected such that~$S_n(\mb{z}_n) \neq \emptyset$.
\label{as:context_always_safe}
\end{assumption}
Here,~$S_n(\mb{z}_n)$ denotes the safe set for the given context~$\mb{z}_n$.
Intuitively, \cref{as:context_always_safe} ensures that for every context there exists at least one parameter choice that is known to satisfy all safety constraints. This assumption includes the case where safe initial parameters  for all contexts are known~\textit{a priori} and the case where the algorithm terminates and asks for help from a domain-expert whenever a context leads to an empty safe set.

\begin{figure*}
  \includegraphics[scale=1]{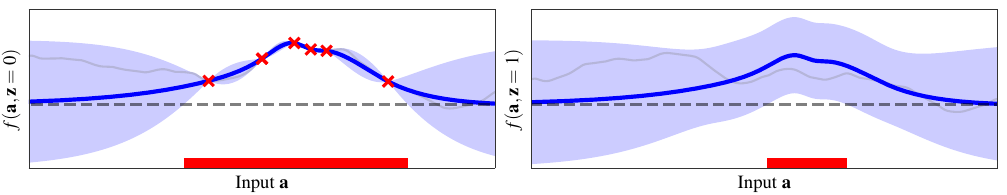}
  \caption{Example run of the context-dependent \textsc{SafeOpt-MC} algorithm. For the first six samples, the algorithm only sees the context~$\mb{z}=0$ (left function) and obtains measurements there (red crosses). However, by exploiting correlations between different contexts, the algorithm can transfer knowledge about the shape of the function and safe set over to a different context,~$\mb{z}=1$ (right function). This enables the algorithm to be significantly more data-efficient.  \label{fig:context_transfer}}
\end{figure*}

A trivial extension of \textsc{SafeOpt-MC} to contexts is to run~$|\mathcal{Z}|$ independent instances of~\cref{alg:safe_opt}, one for each context. This way, it is sufficient to repeatedly see a context several times to apply the previous results to the safe contextual optimization case. One can apply the previous analysis to this setting, but it would fail to yield guarantes that hold {\em jointly} for all contexts.

In order to obtain stronger results that hold jointly across all contexts in~$\mathcal{Z}$, we adapt the information capacity (worst-case mutual information)~$\gamma_n$ to consider contexts,
\begin{equation}
  \gamma_n = \max_{\mathcal{D} \subseteq \mathcal{A} \times \mathcal{Z} \times \mathcal{I}, |\mathcal{D}| \leq n} \mathrm{I}(\hat{\mb{h}}_\mathcal{D}; h),
  \label{eq:information_capacity_with_context}.
\end{equation}
Unlike in~\cref{eq:information_capacity}, the mutual information is maximized across contexts in~\cref{eq:information_capacity_with_context}. As a result, we can use~\cref{thm:confidence_interval} to obtain confidence intervals that hold jointly across all contexts.

A second challenge is that contexts are chosen in an arbitrary order. This is in stark contrast to the parameters~$\mb{a}_n$, which are chosen according to~\cref{math:safeopt_selection_criterion} in order to be informative. This means that any tight finite sample bound on~\cref{alg:safe_opt} must necessarily depend on the order of contexts. The following theorem accounts for both of these challenges.

\begin{theorem}
  Under the assumptions of~\cref{thm:safeopt_optimality_safety} and~\cref{as:context_always_safe}. Choose~$\beta_n$ as in~\cref{thm:confidence_interval}, where~$\gamma_n$ is now the worst-case mutual information over contexts as in~\cref{eq:information_capacity_with_context}. For a given context order~$(\mb{z}_1, \mb{z}_2,\dots)$ and any context~$\mb{z} \in \mathcal{Z}$, let
  \begin{equation}
    n(\mb{z}) = \sum_{n=1}^{n^*(\mb{z})} \mathds{1}_{\mb{z} = \mb{z}_n}
  \end{equation}
  be the number of times that we have observed the context~$\mb{z}$ up to iteration~$n^*$ and $\mathds{1}$ is the indicator function. Let $n^*(\mb{z})$ be the smallest positive integers such that
\begin{equation}
\frac{n(\mb{z})}{\beta_{n^*(\mb{z})}~ \gamma_{ n(\mb{z}) | \mathcal{I}|}(\mb{z})} \geq \frac{C_1 (|\bar{R}_0(S_0(\mb{z}))| + 1)}{\epsilon^2},
\end{equation}
where~${C_1 = 8 / \log(1 + \sigma^{-2})}$.
We note the information capacity for a fixed context~$\mb{z}$ by ~$\gamma_n(\mb{z})$. That is, with ${h_\mb{z}(\mb{a}, i) = h(\mb{a}, i, \mb{z})}$ it is defined as
$\gamma_n(\mb{z}) = \max_{\mathcal{D} \subseteq \mathcal{A} \times \mathcal{I}, |\mathcal{D}| \leq n} \mathrm{I}(\hat{\mb{h}_\mb{z}\,}_\mathcal{D}; h_\mb{z})$.  For any~${\epsilon > 0}$ and~${\delta \in (0, 1)}$, let $f_\epsilon^*(\mb{z}) = \max_{\mb{a} \in \bar{R}_\epsilon(S_0)} f(\mb{a}, \mb{z})$. Then, when running~\cref{alg:safe_opt} the following inequalities jointly hold with probability at least~${1-\delta}$:
\begin{enumerate}
\item $\forall n \geq 1, i \in \mathcal{I}_g \colon g_i(\mb{a}_n, \mb{z}_n) \geq 0$
\item $\forall \mb{z} \in \mathcal{Z}, n \geq n^*(\mb{z}): \, f(\hat{\mb{a}}_n, \mb{z}) \geq f_\epsilon^*(\mb{z}) - \epsilon$
\end{enumerate}
\label{thm:safeopt_optimality_safety_context}
\end{theorem}
\begin{proof}
  For a fixed context, ${\mb{z}_n = \mb{z}\, \forall n}$, we have~$n^*(\mb{z}) = n(\mb{z})$ and the results follow directly as in~\cref{thm:safeopt_optimality_safety}. Otherwise, the only difference in the proofs is that~$\beta$ depends on the information capacity over contexts, making sure that the confidence intervals are valid across contexts. By visiting contexts in~${\mathcal{Z} \setminus \{ \mb{z} \} }$, we obtain more measurements and~$\beta$ increases, which in turn increases the upper bound on the number of samples required at context~$\mb{z}$.
  \qed
\end{proof}

\cref{thm:safeopt_optimality_safety_context} states that the contextual variant of~\cref{alg:safe_opt} enjoys the same safety guarantees as the non-contextual version. Additionally, it shows that, after gaining enough information about a particular context, it can identify the optimal parameters. In practice, this upper bound is conservative, since it does not acount for knowledge transfer accross contexts. In practice, correlations between contexts significantly speed up the learning process. For example, in~Figure~\ref{fig:context_transfer} we show a contextual safe optimization problem with two contexts. Even though the algorithm has only been able to explore the parameter space at the first context ($\mb{z}=0$, left function), the correlation between the functions means that information can be transferred to the as-of-yet unobserved context (${\mb{z}=1}$, right function). This knowledge transfer significantly improves data-efficiency and the number of evaluations required by the algorithm.


\subsection{Practical Implementation}
\label{sec:practical_algorithm}

In this section, we discuss possible changes to~\cref{alg:safe_opt} that make the algorithm more practical, at the expense of loosing some of the theoretical guarantees. One challenge in applying~\cref{alg:safe_opt} in practice, is defining a suitable Lipschitz constant. In particular, specifying the wrong constant can lead to conservativeness or unsafe parameters being evaluated. Moreover, smoothness assumptions are already encoded by the kernel choice, which is more intuitive to specify than Lipschitz constants on their own. In practice, we use only the GP model to ensure safety~\citep{Berkenkamp2016Safe}; that is, we define~$l^i_n(\mb{a}) = \min Q_n(\mb{a}, i)$ and~$u^i_n(\mb{a}, i) = \max Q_n(\mb{a}, i)$ in terms of the confidence intervals of the GP directly. Thus, we can define the safe set without a Lipschitz constant as
\begin{equation}
S_n = S_0 \cup \left\{ \mb{a} \in \mathcal{A} ~|~\forall i \in \mathcal{I}_g \colon l^i_n(\mb{a}) \geq 0 \right\}.
\label{math:confidence_safe_set}
\end{equation}
While it is difficult to prove the full exploration of the safely reachable set as in~\cref{thm:safeopt_optimality_safety}, the resulting algorithm remains safe:
\begin{lemma}
With the assumptions of~\cref{thm:confidence_interval}, ${ S_0 \neq \emptyset }$, and~${g_i(\mb{a}) \geq 0}$ for all~${ \mb{a} \in S_0 }$ and~${ i \in \mathcal{I}_g}$, when running \cref{alg:safe_opt} with the safe set defined as in~\cref{math:confidence_safe_set}, the following holds with probability at least~${1 - \delta}$:
\begin{equation}
\forall n \geq 1, \, \forall i \in \mathcal{I}_g \colon\, g_i(\mb{a}_n) \geq 0.
\end{equation}
\label{thm:safety_only}
\end{lemma}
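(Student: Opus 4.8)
The plan is to reuse only the confidence-bound guarantee of \cref{thm:confidence_interval} together with the inclusion $G_n \cup M_n \subseteq S_n$; in contrast to statement~(1) of \cref{thm:safeopt_optimality_safety}, neither the Lipschitz constant nor the nested ``contained'' sets $C_n$ are needed, because the safety argument only ever relied on the lower bounds $l_n^i$ under-estimating the true constraint values, and this remains true when $l_n^i(\mb{a}) = \min Q_n(\mb{a}, i)$.

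Concretely, I would first condition on the event $E$ on which the conclusion of \cref{thm:confidence_interval} holds, namely $|p(\mb{a}, i) - \mu_{n-1}(\mb{a}, i)| \le \beta_n^{1/2} \sigma_{n-1}(\mb{a}, i)$ for all $n \ge 1$, all $\mb{a} \in \mathcal{A}$, and all $i \in \mathcal{I}$; by that lemma $\Pr(E) \ge 1 - \delta$. On $E$, using that in the practical variant $l_n^i(\mb{a}) = \min Q_n(\mb{a}, i) = \mu_{n-1}(\mb{a}, i) - \beta_n^{1/2} \sigma_{n-1}(\mb{a}, i)$, we obtain for every $n$, every $\mb{a} \in \mathcal{A}$, and every constraint index $i \ge 1$,
\[
g_i(\mb{a}) = p(\mb{a}, i) \ge l_n^i(\mb{a}).
\]
No induction over $n$ is required here (unlike in the proof of \cref{thm:safeopt_optimality_safety}), since $S_n$ in \cref{math:confidence_safe_set} is built directly from the iteration-$n$ confidence bounds and from $S_0$, so no monotonicity of the safe sets has to be propagated.

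Next I would fix an iteration $n$ and a constraint index $i \ge 1$ and use that \cref{alg:safe_opt} only evaluates $\mb{a}_n \in G_n \cup M_n$, and that $G_n \cup M_n \subseteq S_n$ directly from the definitions \cref{math:maximizer_set,math:expander_set}. By \cref{math:confidence_safe_set}, one of two cases holds: either $\mb{a}_n \in S_0$, in which case $g_i(\mb{a}_n) \ge 0$ by the hypothesis that all constraints are satisfied on $S_0$; or $\mb{a}_n$ lies in the set $\{\mb{a} \in \mathcal{A} \mid \forall i \ge 1 \colon l_n^i(\mb{a}) \ge 0\}$, which gives $l_n^i(\mb{a}_n) \ge 0$ and hence, combined with the bound above, $g_i(\mb{a}_n) \ge l_n^i(\mb{a}_n) \ge 0$. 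In either case $g_i(\mb{a}_n) \ge 0$; since this holds for all $n$ and all constraint indices on the event $E$, and $\Pr(E) \ge 1 - \delta$, the lemma follows (the index $i = 0$ corresponds to the performance $f$ rather than a safety constraint, so the conclusion is read over the constraint indices).

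I do not expect any serious obstacle: modulo the bookkeeping above, the statement is essentially an immediate corollary of \cref{thm:confidence_interval} and the inclusion $G_n \cup M_n \subseteq S_n \subseteq S_0 \cup \{\mb{a} \mid \forall i \ge 1 \colon l_n^i(\mb{a}) \ge 0\}$. The two points that deserve a sentence of care are \emph{(i)} explaining why discarding the Lipschitz-based expansion and the sets $C_n$ does not compromise safety --- it does not, because only the lower confidence bounds entered the safety part of the argument --- and \emph{(ii)} the case $\mb{a}_n \in S_0$, which must be handled separately because such points may have a negative lower confidence bound yet are safe by assumption.
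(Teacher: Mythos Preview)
Your proposal is correct and follows essentially the same approach as the paper's proof, which is a two-line argument: the confidence intervals of \cref{thm:confidence_interval} hold with probability $1-\delta$, and since $S_n$ in \cref{math:confidence_safe_set} is by construction the set where the lower bounds certify feasibility (union $S_0$, which is safe by assumption and ensures non-emptiness), any $\mb{a}_n\in G_n\cup M_n\subseteq S_n$ is safe. Your write-up simply unpacks this more carefully, in particular the separate treatment of $\mb{a}_n\in S_0$ and the remark that neither the Lipschitz constant nor the contained sets $C_n$ are needed here.
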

\begin{proof}
The confidence intervals hold with probability~$1-\delta$ following~\cref{thm:confidence_interval}. Since~$S_n$ in~\cref{math:confidence_safe_set} is defined as the set of parameters that fulfill the safety constraint and the safe set is never empty since~${S_0 \neq \emptyset}$, the claim follows. \qed
\end{proof}

Similarly, the set of expanders can be defined in terms of the GP directly, by adding optimistic measurements and counting the number of new parameters that are classified as safe, see~\citep{Berkenkamp2016Safe} for details. However, this potentially adds a large computational burden.

The parameter~$\beta_n$, which determines the GP's confidence interval in~\cref{thm:confidence_interval}, may be impractically conservative for experiments. The theoretical safety results also hold when we replace~$\gamma_n$ in~$\beta_n$ by the empirical mutual information gained so far,~$\mathrm{I}(\hat{\mb{h}}_{\mathcal{D}_n \times \mathcal{I}}, h)$. Empirically, depending on the application, one may also consider setting~$\beta_n$ to a constant value. This roughly corresponds to bounding the failure probability per iteration, rather than over all iterations.

Learning all the different functions,~$f$ and~$g_i$, up to the same accuracy~$\epsilon$ may be restrictive if they are scaled differently. A possible solution is to either scale the observed data, or to scale the uncertainties in~\cref{math:safeopt_selection_criterion} by the prior variances of the kernels for that specific output. This enables~\cref{math:safeopt_selection_criterion} to make more homogeneous decisions across different scales.


\section{Quadrotor Experiments}
\label{sec:results}

In this section, we demonstrate~\cref{alg:safe_opt} (with the changes discussed in~\cref{sec:practical_algorithm}) in experiments on a quadrotor vehicle, a Parrot AR.Drone 2.0.

A Python implementation of the \textsc{SafeOpt-MC} algorithm that builds on~\citepalias{TheGPyauthors2012GPy}, a GP library, is available at \mbox{\url{http://github.com/befelix/SafeOpt}}. Videos of the experiments can be found at
\begin{itemize}
	\item \cref{sec:exp_linear_control}: \mbox{\url{http://tiny.cc/icra16_video}}
	\item \cref{sec:experiment_step_response}: \mbox{\url{https://youtu.be/rLmwYtoE3yg}}
	\item \cref{sec:experiment_circle}: \mbox{\url{https://youtu.be/4xC4OSiIDGk}}
\end{itemize}

\subsection{Experimental Setup}

During the experiments, measurements of all vehicle states are estimated from position and pose data provided by an overhead motion capture camera system. The quadrotor's dynamics can be described by six states: positions~${\mb{x} = (x, y, z)}$, velocities~${\dot{\mb{x}} = (\dot{x}, \dot{y}, \dot{z})}$, ZYX Euler angles~${(\phi, \theta, \psi)}$, and body angular velocities~${(\omega_x, \omega_y, \omega_z)}$. The control inputs~$\mb{u}$ are the desired roll and pitch angles~$\theta_{\mathrm{des}}$ and~$\phi_{\mathrm{des}}$, the desired~$z$-velocity~$\dot{z}_{\mathrm{des}}$, and the desired yaw angular velocity~$\omega_{z,\mathrm{des}}$, which in turn are inputs to an unknown, proprietary, on-board controller.

The position dynamics in the global coordinate frame are
\begin{equation}
\ddot{\mb{x}} = R_{\mathrm{ZYX}}(\phi, \theta, \psi) \vec{f} - \vec{g},
\label{math:quadrotor_dynamics}
\end{equation}
where~$R_{\mathrm{ZYX}}$ is the rotation matrix from the body frame to the inertial frame,~${\vec{f}=(0,0,c)}$ is the mass-normalized thrust, and~${\vec{g}=(0,0,g)}$ is the gravitational force. The goal of the controller is to track a reference signal. We assume that $z$-position and the yaw angle are controlled by fixed control laws and focus on the position control in $x$- and $y$ direction. We use two different control laws in the following experiments.

The most simple control law that can be used for this setting is a PD-controller, defined as
\begin{align}
\phi_\mathrm{des} &= k_1 (x_k - x_\mathrm{des}) + k_2 (\dot{x} - \dot{x}_\mathrm{des}), \label{eq:linear_ctrl_x} \\
\theta_\mathrm{des} &= k_1 (y_k - y_\mathrm{des}) + k_2 (\dot{y} - \dot{y}_\mathrm{des}),
\end{align}
where~${\mb{a}=(k_1, k_2)}$ are the two tuning parameters. Intuitively, a larger parameter~$k_1$ encourages tracking reference changes more aggressively, while~$k_2$ is a damping factor that encourages moderate velocities.

A more sophisticated approach to control quadrotor vehicles is to use estimates of the angles and accelerations to solve for the thrust~$c$.
We use loop shaping on the horizontal position control loops so that they behave in the manner of a second-order systems with time constant~$\tau$ and damping ratio~$\zeta$. Based on a given desired reference trajectory, commanded accelerations are computed as
\begin{align}
\ddot{x}_c &= \frac{1}{\tau^2} (x_{\mathrm{des}} - x) + \frac{2 \zeta}{\tau} (\dot{x}_{\mathrm{des}} - \dot{x}), \label{math:xdd_desired} \\
\ddot{y}_c &= \frac{1}{\tau^2} (y_{\mathrm{des}} - y) + \frac{2 \zeta}{\tau} (\dot{y}_{\mathrm{des}} - \dot{y}). \label{math:ydd_desired}
\end{align}
From the commanded accelerations, we then obtain the control inputs for the desired roll and pitch angles by solving~\cref{math:quadrotor_dynamics} for the angles. Here, the tuning parameters are~${\mb{a}=(\tau, \zeta)}$. For details regarding the controllers see~\citep{Schoellig2012Feedforward,Lupashin2014Platform}.

The quadrotor was controlled using the ardrone\_autonomy and vicon\_bridge packages in ROS Hydro. Computations for the \textsc{SafeOpt-MC} algorithm in~\cref{alg:safe_opt} were conducted on a regular laptop and took significantly less than one second per iteration. As a result, experiments could be conducted continuously without interruptions or human interventions.

\subsection{Kernel Selection}

\cref{alg:safe_opt} critically depends on the GP model for the performance and constraint functions. In this section, we review the kernel used in our experiments and the kind of models that they encode. A more thorough review of kernel properties can be found in~\citep{Rasmussen2006Gaussian}.

In our experiments, we use the Mat{\`e}rn kernel with parameter~${\nu = 3/2}$~\citep{Rasmussen2006Gaussian},
\begin{align}
k(\mb{a}, \mb{a}') &= \kappa^2 \left( \hspace{-0.25em} 1 \hspace{-0.15em} + \hspace{-0.2em} \sqrt{3}\,r(\mb{a}, \mb{a}') \hspace{-0.25em} \right) \exp \left(\hspace{-0.25em}- \hspace{-0.05em} \sqrt{3} \,r(\mb{a}, \mb{a}') \hspace{-0.25em} \right) \hspace{-0.25em},
\label{math:matern_kernel} \\
r(\mb{a}, \mb{a}') &=  \sqrt{ (\mb{a} - \mb{a}')^\T \mb{M}^{-2} (\mb{a} - \mb{a}') },
\end{align}
which encodes that mean functions are one-times differentiable. This is in contrast to the commonly used squared exponential kernels, which encode smooth (infinitely differentiable) functions. With the Mat{\`e}rn kernel, the GP model is parameterized by three hyperparameters: measurement noise~$\sigma^2$ in~\cref{math:gp_prediction_mean,math:gp_prediction_variance}, the kernel's prior variance~$\kappa^2$, and positive lengthscales~${\mb{l} \in \mathcal{R}_+^{\mathcal{A}}}$, which are the diagonal elements of the diagonal matrix~$\mb{M}$, ${\mb{M} = \mathrm{diag}(\mb{l})}$. These hyperparameters have intuitive interpretations: the variance of the measurement noise~$\kappa^2$ corresponds to the noise in the observations, which includes any randomness in the algorithm and initial conditions, and random disturbances. The prior variance~$\kappa^2$ determines the expected magnitude of function values; that is,~${|f(\mb{a})| \leq \kappa}$ with probability~$0.68$ according to the GP prior. Lastly, the lengthscales~$\mb{l}$ determine how quickly the covariance between neighboring values deteriorates with their distance. The smaller the lengthscales, the faster the function values can change from one parameter set to the next. In particular, the high-probability Lipschitz constant encoded by this kernel depends on the ratio between the prior variance and the lengthscales,~${\kappa / \mb{l}}$.

When using GPs to model dynamic systems, typically a maximum likelihood estimate of the hyperparameters is used based on data; see~\citep{Ostafew2016Robust} for an example. For Bayesian optimization, the GP model is used to actively acquire data, rather than only using it for regression based on existing data. This dependence between the kernel hyperparameters and the acquired data is known to lead to poor results in Bayesian optimization when using a maximum likelihood estimate of the hyperparameters during data acquisition~\citep{Bull2011Convergence}. In particular, the corresponding GP estimate is not guaranteed to contain the true function as in~\cref{thm:confidence_interval}. In this work, we critically rely on these confidence bounds to guarantee safety. As a consequence, we do not adapt the hyperparameters as more data becomes available, but treat the kernel as a prior over functions in the true Bayesian sense; that is, the kernel hyperparameters encode our prior knowledge about the functions that we model and are fixed before experiments begin. While this requires intuition about the process, intuitively the less knowledge we encode in the prior, the more data and evaluations on the real system are required in order to determine the best parameters.

\subsection{Linear Control}
\label{sec:exp_linear_control}

In this section, we use \textsc{SafeOpt-MC} to optimize the parameters of the linear control law in~\cref{eq:linear_ctrl_x}. The entire control algorithm consists of this control law together with the on-board controller and state estimation.

The goal is to find controller parameters that maximize the performance during a 1-meter reference position change. For an experiment with parameters~$\mb{a}_\ti$ at iteration~$\ti$, the performance function is defined as
\begin{align}
f(\mb{a}_\ti) &= c(\mb{a}_\ti) - 0.95\, c(\mb{a}_0),
\label{math:icra_performance_function} \\
c( \mb{a}_\ti ) &= -\sum_{k=0}^N \mathbf{x}_k^\mathrm{T} \mathbf{Q} \mathbf{x}_k + R u_k^2,
\label{math:icra_cost_function}
\end{align}
where, to compute the cost~$c$, the states~${\mb{x} = (x - 1, \dot{x}, \phi, \omega)}$ and the input~$u$ are weighted by positive semi-definite matrices~$\mathbf{Q}$ and~$R$. The subscript~$k$ indicates the state measurement at time step~$k$ in the trajectory and the time horizon is~\unit[5]{s}~(${N=350}$).
Performance is defined as the cost improvement relative to $95\%$ of the initial controller cost. The safety constraint is defined only in terms of the performance; that is, the constraint is~$g(\mb{a})=f(\mb{a})\geq 0$, which encodes that we do not want to evaluate controller parameters that perform significantly worse than the initial parameters.

While the optimal controller parameters could be easily computed given an accurate model of the system, we do not have a model of the dynamics of the proprietary, on-board controller and the time delays in the system.
Moreover, we want to optimize the performance for the real, nonlinear quadrotor system, which is difficult to model accurately.
An inaccurate model of the system could be used to improve the prior GP model of the performance function, with the goal of achieving faster convergence. In this case, the uncertainty in the GP model of the performance function would account for inaccuracies in the system model.

We define the domain of the controller parameters as~$[-0.6, 0.1]^2$, explicitly including positive controller parameters that certainly lead to crashes.
In practice, one would exclude parameters that are known to be unsafe~\textit{a priori}.
The initial controller parameters are $(-0.4, -0.4)$, which result in a controller with poor performance.
Decreasing the controller gains further leads to unstable controllers.

\begin{figure}[t]
\includegraphics[scale=1]{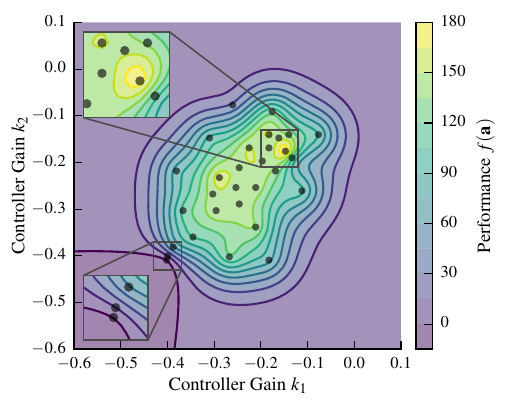}
\caption{GP mean estimate of the performance function after~$30$ evaluations. The algorithm adaptively decides which parameters to evaluate based on safety and informativeness. In the bottom-left corner, there is the magnified section of the first three samples, which are close together to determine the location of the initial, safe region. The maximum, magnified in the top-left corner, also has more samples to determine the precise location of the maximum. Other areas are more coarsely sampled to expand the safe region.}
\label{fig:icra_performance}
\end{figure}

The parameters for the experiments were set as follows: the length-scales were set to~$0.05$ for both parameters, which corresponds to the notion that a~0.05-0.1 change in the parameters leads to very different performance values. The prior standard deviation,~$\kappa$, and the noise standard deviation,~$\sigma$, are set to~$5\%$ and~$10\%$ of the performance of the inital controller,~$f(\mb{a}_0)$, respectively. The noise standard deviation,~$\sigma$, mostly models errors due to initial position offsets, since state measurements have low noise. The size of these errors depends on the choice of the matrices~$\mb{Q}$ and~$R$. By choosing~$\sigma$ as a function of the initial performance, we account for the~$\mb{Q}$ and~$R$ dependency. Similarly,~$\kappa$ specifies the expected size of the performance function values. Initially, the best we can do is to set this quantity dependent on the initial performance and leave additional room for future, larger performance values. For the GP model, we choose~${\beta^{1/2}_n=2}$ to define the confidence interval in~\cref{math:gp_confidence_intervals}.

The resulting, estimated performance function after running~\cref{alg:safe_opt} for~30~experiments is shown in Fig.~\ref{fig:icra_performance}.
The unknown function has been reliably identified. Samples are spread out over the entire safe set, with more samples close to the maximum of the function and close to the initial controller parameters. No unsafe parameters below the safety threshold were evaluated on the real system.

Typically, the optimization behavior of~\cref{alg:safe_opt} can be roughly separated into three stages.
Initially, the algorithm evaluates controller parameters close to the initial parameters in order for the GP to acquire information about the safe set (see lower-left, zoomed-in section in~\cref{fig:icra_performance}).
Once a region of safe controller parameters is determined, the algorithm evaluates the performance function more coarsely in order to expand the safe set.
Eventually, the controller parameters are refined by evaluating high-performance parameters that are potential maximizers in a finer grid (see upper-left, zoomed-in section in~\cref{fig:icra_performance}).
The trajectories of the initial, best and intermediate controllers can be seen in~\cref{fig:icra_trajectories}.

\begin{figure}[t]
\includegraphics[scale=1]{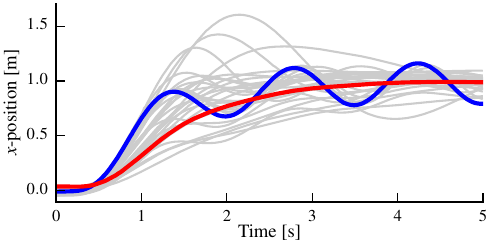}
\caption{The quadrotor controller performance is evaluated during a~\unit[5]{s} evaluation interval, where a~\unit[1]{m} reference position change must be performed. The trajectories correspond to the optimization routine in~\cref{fig:icra_performance}.
The initial controller (blue) performs poorly but is stable.
In contrast, the optimized controller (red) shows an optimized, smooth, and fast response.
The trajectories of other controller parameters that were evaluated are shown in gray.}
\label{fig:icra_trajectories}
\end{figure}

\subsection{Nonlinear Control}
\label{sec:experiment_step_response}

\begin{figure}[t]
\includegraphics[scale=1]{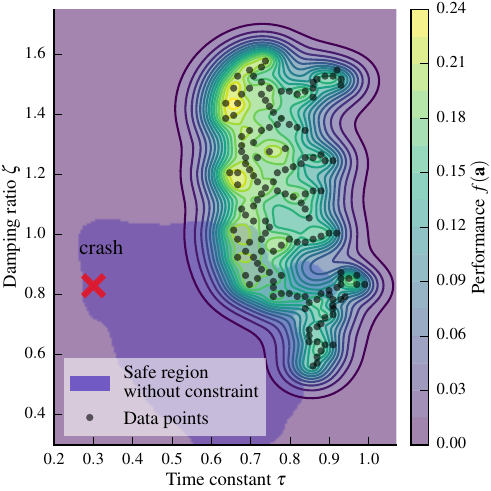}
\caption{Mean estimate of the root-mean-square error when optimizing the parameters of the nonlinear control law for a step response, subject to safety constraints. The algorithm carefully evaluates only safe parameter combinations, until the safe region cannot be expanded further without violating constraints. Without the safety constraint, the algorithm explores a larger region of the parameter space (light blue) and eventually evaluates an unsafe parameter set.}
\label{fig:step_input}
\end{figure}

In the previous section, we showed how to optimize the performance of a linear control law subject to a simple constraint on performance. In this section, we optimize the nonlinear controller in~\cref{math:xdd_desired,math:ydd_desired} and show how more complex constraints can be used.

We use the same task as in the previous section, but this time the goal is to minimize the root-mean-square error (RMSE) over a time horizon of~\unit[5]{s}~(${N =350}$~samples) during a 1-meter reference position change in $x$-direction. We define the performance function,
\begin{align}
f(\mb{a}_n) &= c(\mb{a}_n) - 0.75\, c(\mb{a}_0),
\label{math:performance_function} \\
c ( \mb{a}_n ) &= \frac{1}{\sqrt{N}} \left( \sum_{k=1}^N \|\mb{x}_k - \mb{x}_{\mathrm{des},k} \|_2^2 \right)^{1/2},
\label{math:cost_function}
\end{align}
as the performance relative to~$75\%$ of the performance of the initial parameters~${\mb{a}_0 = (0.9, 0.8)}$. We define the GP model of this performance function as follows: in this experiment, measurement noise is minimal, since the positions are measured accurately by the overhead camera system. However, to capture errors in the initial position, we define~${ \sigma = 0.05 c(\mb{a}_0) }$. We assume that we can improve the initial controller by roughly~$20\%$, so we set~${\kappa = 0.2 c(\mb{a}_0)}$. The lengthscales are set to~$0.05$ in order to encourage cautious exploration. These parameters turned out to be conservative for the real system. Notice that the cost is specified relative to~$c(\mb{a}_0)$ instead of~$f(\mb{a}_0)$ as in~\cref{sec:exp_linear_control}. Since $c(\mb{a}_0) > f(\mb{a}_0)$, these hyperparameters are more conservative, so that we require more evaluations on the real system. The reason for this more conservative choice is that the nonlinear controller is expected to have a less smooth performance function, unlike the one for linear control, which is expected to be roughly quadratic.

If, as in the previous section, one were to set the safety constraint to~${g_1(\mb{a}) = f(\mb{a})}$, the algorithm would classify the blue shaded region in~\cref{fig:step_input} as safe. This region includes time constants as low as~${\tau=0.3}$, which encourage highly aggressive maneuvers, as would be expected from a performance function that encourages changing position as fast as possible. However, these high gains amplify noise in the measurements, which can lead to crashes; that is, the performance-based constraint cannot properly encode safety. Notice that the blue shaded area does not correspond to full exploration, since the experiment was aborted after the first, serious crash. The reason for the unsafe exploration is that the RMSE performance function in~\cref{math:cost_function} does not encode safety the same way as as weighting of state and input errors in~\cref{fig:icra_performance} does. Thus, in order to encode safety, we need to specify additional safety constraints.

One indication of unsafe behavior in quadrotors are high angular velocities when the quadrotor oscillates around the reference point. We define an additional safety constraint on the maximum angular velocity~${\max_{k} |\omega_{x,k}| \leq \unit[0.5]{rad/s}}$ by setting ${g_2(\mb{a}) = 0.5 - \max_{k} |\omega_{x,k}| }$. The corresponding hyperparameters are selected as~${\sigma_2 = 0.1}$, ${l=0.2}$, and~${\kappa=0.25}$. The measurement noise can be chosen relatively small here, since it corresponds to a single measurement of angular velocity. Note that it is difficult to perform the step maneuver with an angular velocity lower than~\unit[0.4]{rad/s}, so that typical values of~$g_2$ are smaller than~$0.1$.

With this additional safety constraint, the algorithm explores the parameter space and stops before the safety constraints are violated, as can be seen in~\cref{fig:step_input}. Rather than exploring smaller time constants~$\tau$ (higher gains), the algorithm evaluates larger damping ratios, which allow slightly smaller values of~$\tau$ and therefore higher performance without violating the safety constraints. The optimal parameters are to the top-left of the safe set, where small time constants encourage tracking the reference aggressively, while the increased damping ratio ensures a moderate angular velocity.

\subsection{Circle Trajectory}
\label{sec:experiment_circle}

\begin{figure}
\includegraphics[scale=1]{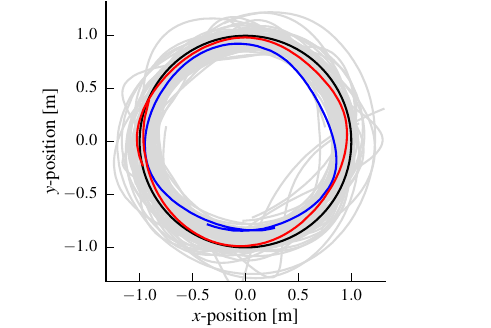}
\caption{The trajectories (gray) resulting from iteratively optimizing the controller parameters for a unit circle reference trajectory at~\unit[1]{m/s} (black). The trajectory with the initial parameters (blue) has poor tracking performance, while the optimized parameters (red) perform significantly better. The flight is safe, i.e., only safe parameters are evaluated.}
\label{fig:circle_trajectories}
\end{figure}

In this section, we use the same nonlinear controller and cost function as in the previous section, but aim to optimize the RMSE with respect to a circle trajectory of radius~\unit[1]{m} at a speed of~\unit[1]{m/s}. The reference is defined as a point moving along the circle at the desired speed. Feasibility of such motions has been analyzed in~\cite{Schoellig2011Feasiblity}.

\begin{figure*}[t]
\centering
\subfloat[Speed: $\dot{x}_\mathrm{des} = 1\,\frac{\mathrm{m}}{\mathrm{s}}$]{
	\includegraphics[scale=1]{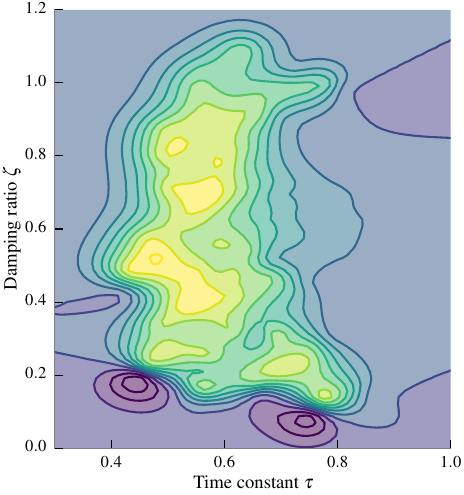}
	\label{fig:context_circle_slow}}
\hfill
\subfloat[Speed: $\dot{x}_\mathrm{des} = 1.8\,\frac{\mathrm{m}}{\mathrm{s}}$]{
	\includegraphics[scale=1]{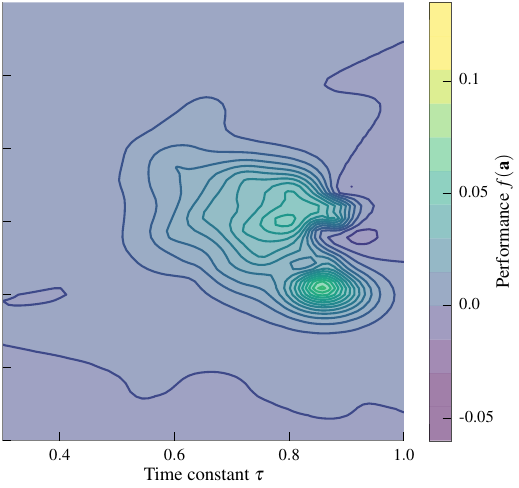}
	\label{fig:context_circle_fast}}
\caption{The mean estimate of the performance function for the circle trajectory in~\cref{fig:circle_trajectories} for a speed of~\unit[1]{m/s} (left) and~\unit[1.8]{m/s} (right). Extending the kernel with a context for speed allows to transfer knowledge to different speeds and leads to speed-dependent optimal control parameters, speeding up the learning for higher speeds.}
\label{fig:context_circle}
\end{figure*}

In order to ensure good tracking behavior, we define safety as a constraint on the maximum RMSE of \unit[0.2]{m}. Additionally, we use the same constraint on the maximum angular velocity around the~$x$ and~$y$ axis of~\unit[0.5]{rad/s} as before.
The yaw-angle is set so that the quadrotor always points to the center of the circle, which ideally should lead to zero angular velocity. Deviations from this are an indication of unsafe behavior. We use the same kernel hyperparameters as in~\cref{sec:experiment_step_response}.

The trajectories that result from running the optimization algorithm are shown in~\cref{fig:circle_trajectories}. The initial controller parameters lead to very poor performance. In particular, the initial time constant is too large, so that the quadrotor lags behind the reference. As a result, the quadrotor flies a circle of smaller radius. In contrast, the resulting optimized trajectory (in red) is the best that can be obtained given the safety constraints and controller structure above. The mean estimate of the performance function after the experiments can be seen in~\cref{fig:context_circle_slow}. The optimal parameters have smaller time constants, so that the position is tracked aggressively. Since the reference point moves of~\unit[1]{m/s}, these aggressive controller parameters do not lead to unsafe behavior. During the entire optimization, only safe parameters that keep the vehicle within the constraints on RMSE and angular velocity are evaluated.

\subsection{Context-Dependent Optimization}
\label{sec:experiment_context}

In this section, we show how the knowledge about good controller parameters at low speeds can be used to speed up the safe learning at higher speeds.

In our circle experiment, the quadrotor tracked a moving reference. As this reference moves with high velocities, the quadrotor gets pushed to its physical actuator limits and starts to lag behind the reference. This causes the circle that is flown by the quadrotor to have a smaller radius than the reference trajectory. In this section, the goal is to maximize the speed of the reference trajectory subject to the safety constraints of the previous experiment in~\cref{sec:experiment_circle}. One way to achieve this goal, is to add the speed of the reference point to the performance function. However, this would lead to more experiments, as the algorithm will explore the safe parameter space for every velocity. Instead, here we define the trajectory speed as a context, which is set externally. In particular, we set
\begin{equation}
z_n = \argmax_{v \in \mathbb{R},\, \mb{a} \in \mathcal{A}} v \quad \textnormal{subject to: }\,\, g_i(\mb{a}, v) \geq 0,\, \forall i \in \mathcal{I}_g,
\label{eq:speed_context}
\end{equation}
that is, we select the maximum velocity for which there are safe parameters known. While here we select the context manually, in practice contexts can be used to model any external, measurable variables, such as the battery level, see~\cref{sec:contextual_bayesian_optimization}.

In order to transfer knowledge about good controller parameters from the slow speed in~\cref{sec:experiment_circle} to higher speeds, we model how performance and constraints vary with desired speed by defining a kernel~$k_z(\dot{x}_\mathrm{des}, \dot{x}_\mathrm{des}')$ over contexts. We use the same kernel structure as in~\cref{eq:context_kernel} and hyperparameters~${\kappa = 1}$ and~$l=0.25$. Based on the data from~\cref{sec:experiment_circle}, the extended model allows us to determine speeds for which safe controller parameters are known.

Starting from the data of the previous experiments in~\cref{sec:experiment_circle}, we run~\textsc{SafeOpt-MC} using the extended kernel with the additional speed context determined by~\cref{eq:speed_context}. This allows us to find optimal parameters for increasingly higher speeds, which satisfy the constraints. We can safely increase the speed up to~\unit[1.8]{m/s}. We show the mean performance function estimates for two speeds in ~\cref{fig:context_circle}. For lower speeds, the best controller parameters track the reference position more aggressively (low~$\tau$). For higher speeds, this behavior becomes unsafe as the quadrotor lags behind the reference point. Instead, the optimal parameters shift to higher time constants (lower gains). Additionally, as expected, high speeds lead to higher reference tracking errors. Increasing the reference velocity any further causes the performance constraint to be violated.

The trajectories that result from applying the optimal parameters for a speed of~\unit[1]{m/s} and the maximum safe speed of~\unit[1.8]{m/s} can be seen in~\cref{fig:circle_trajectories_context}. For the relatively slow speed of~\unit[1]{m/s} the quadrotor can track the circle well using aggressive parameters. For the higher speed, the reference trajectory moves too fast for the quadrotor to track perfectly within the actuator limits, so that the best parameters just barely satisfy the safety constraint on the average deviation from the reference. Overall, this approach allows us to find context-dependent parameters, while remaining within the safety constraints.

\begin{figure}[t]
\includegraphics[scale=1]{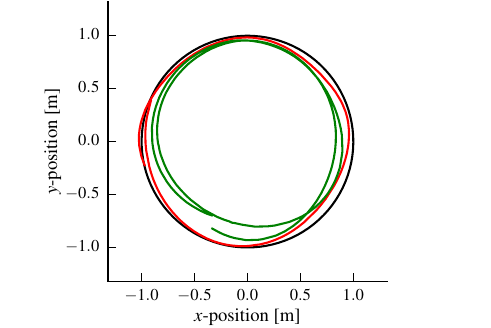}
\caption{Trajectories with optimal parameters for speeds of~\unit[1]{m/s} (red) and~\unit{1.8}[m/s] (green) when tracking the black reference. At slower speeds there exist aggressive controller parameters that allow the quadrotor to track the reference almost perfectly. At higher speeds, actuator saturation limits the achievable performance. Due to the safe optimization framework the maximum speed can be found that does not deviate more from the reference trajectory than is allowed by the safety constraint. The corresponding performance functions can be seen in~\cref{fig:context_circle}.}
\label{fig:circle_trajectories_context}
\end{figure}


\section{Conclusion and Future Work}
\label{sec:conclusion}

We presented a generalization of the Safe Bayesian Optimization algorithm of~\cite{Sui2015Safe} that allows multiple, separate safety constraints to be specified and applied it to nonlinear control problems on a quadrotor vehicle. Overall, the algorithm enabled efficient and automatic optimization of parameters without violating the safety constraints, which would lead to system failures. Currently, the algorithm is mostly applicable to low-dimensional problems due to the computational burdon of optimizing~\cref{math:safeopt_selection_criterion} and the statistical problem of defining suitable GP priors in high-dimensions. While interesting progress has been made in this direction in the standard Bayesian optimization case, future work could explore this in the safety-critical case.

\begin{acks}
This research was supported in part by SNSF grant {200020\_159557}, NSERC grant {RGPIN-2014-04634}, and the Connaught New Researcher Award.
\end{acks}


\section{Proofs}
\label{sec:proofs}
In this section, we provide the proofs for~\cref{thm:safeopt_optimality_safety,thm:safety_only}.

Since we consider the surrogate function~$h(a, i)$ in~\cref{math:surrogate}, we obtain~$q+1$ individual measurements at each iteration, each with individual noise. A measure of how difficult it is to learn an accurate GP model of a function is given by the information capacity. This corresponds to the maximum amount of mutual information between a scalar function~$h$ and measurements~$\hat{\mb{h}}_\mathcal{D}$ at a set of parameters $\mathcal{D}$ of size~$n$. The measurements~$\hat{\mb{h}}_\mathcal{D}$ are corrupted by zero-mean, Gaussian noise. The information capacity is then defined as
\begin{equation}
\gamma_n := \max_{\mathcal{D} \subset \mathcal{A} \times \mathcal{I}, |\mathcal{D}|=n} \mathrm{I}(\hat{\mb{h}}_\mathcal{D}; h),
\end{equation}
which is the maximum amount of information we can obtain about the function~$h$ from~$n$ measurements. The information gain is known to be sublinear in~$n$ for many commonly used kernels~\cite{Srinivas2012Gaussian}. Intuitively, the first samples for the GP model provide a lot of information, since each sample improves the prior significantly. After some iterations the domain~$\mathcal{A}$ is covered with samples in~$\mathcal{D}$, so that additional samples are more correlated with previous data points in~$\mathcal{D}$, rendering the samples less informative. The more prior information we encode in the GP prior, the less information can be gained from the same number of samples.

In our setting, we obtain~${|\mathcal{I}|={q+1}}$ measurements at every iteration step~$n$, each with different, independent noise. The mutual information with regards to these multiple measurements at parameters~$\bar{\mathcal{A}} \subset \mathcal{A}$ can be bounded with
\begin{align}
  \mathrm{I}( \hat{\mb{h}}_{\bar{\mathcal{A}} \times \mathcal{I}} ; h )  &\leq \max_{\bar{\mathcal{A}} \subset \mathcal{A}, |\bar{\mathcal{A}}| \leq n} \mathrm{I}(\hat{\mb{h}}_{\bar{\mathcal{A}} \times \mathcal{I}}  ; h ), \\
&\leq \max_{\mathcal{D} \subset \mathcal{A} \times \mathcal{I}, |\mathcal{D}| \leq n|\mathcal{I}|} \mathrm{I}(\hat{\mb{h}}_{\mathcal{D} }  ; h ), \\
&= \gamma_{|\mathcal{I}|n},
\label{eq:mutual_info_multi_meas_bound}
\end{align}
where~$\bar{\mathcal{A}} \times \mathcal{I}$ is the cartesian product that means we obtain one measurement for every function indexed by~$i \in \mathcal{I}$ at each parameter in~$\bar{\mathcal{A}}$. The first inequality bounds the mutual information gained by~\cref{alg:safe_opt} by the worst-case mutual information, while the second inequality bounds this again by the worst-case mutual information when optimizing over the~$|\mathcal{I}|$ measurements at each iteration step individually. Intuitively, obtaining multiple optimal samples does not fundamentally change the properties of the information gain, but accelerates the rate at which information can be obtained in the worst case by~$|\mathcal{I}|$.

In the following, we assume that~$h(\mb{a}, i)$ has bounded RKHS norm. \cref{thm:confidence_interval}~provides requirements for~$\beta_n$, which will be used in the following to prove the results.

\confidencethm*
\begin{proof}
  Directly follows from~\citet{Chowdhury2017Kernelized}. The only difference is that we obtain~$|\mathcal{I}|$ measurements at every iteration, which causes the information capacity~$\gamma$ to grow at a faster rate. \qed
\end{proof}
\paragraph{Note} Where needed in the following lemmas, we implicitly assume that the assumptions of \cref{thm:confidence_interval} hold, and that~$\beta_n$ is defined as above.

\begin{corollary}
  \label{cor:rkhs}
  For~$\beta_n$ as above, the following holds with probability at least~$1-\delta$:
  \begin{align*}
  \forall n \geq 1,\, \forall i \in \mathcal{I},\, \forall \mb{a} \in \mathcal{A},\, h(\mb{a}, i) \in C_n(\mb{a}, i).
  \end{align*}
\end{corollary}
\begin{proof}
From~\cref{thm:confidence_interval} we know that the true functions are contained in~$Q_n(\mb{a}, i)$ for all iterations~$n$ with probability at least~$1-\delta$. As a consequence, the true functions will be contained in the intersection of these sets with the same probability. \qed
\end{proof}


\cref{cor:rkhs} gives a choice of~$\beta_n$, which ensures that all the function values of~$h$ are contained within their respective confidence intervals with high probability. In the remainder of the paper, we follow the outline of the proofs in~\cite{Sui2015Safe}, but extended them to account for multiple constraints.

We start by showing the dynamics of important sets and functions. Most importantly, the upper confidence bounds are decreasing, lower confidence bounds increasing with the number of iterations, since the sets~$C_{n+1} \subseteq C_n$ for all iterations~$n$.
\begin{lemma}
  \label{lem:basics}
  The following hold for any~$n \geq 1$:
  \begin{enumerate}[label=(\roman*)]
    \item~$\forall \mb{a} \in \mathcal{A}, \forall i \in \mathcal{I},\, u^i_{n+1}(\mb{a}) \leq u^i_n(\mb{a})$,
    \item~$\forall \mb{a} \in \mathcal{A}, \forall i \in \mathcal{I},\, l^i_{n+1}(\mb{a}) \geq l^i_n(\mb{a})$,
    \item~$\forall \mb{a} \in \mathcal{A}, \forall i \in \mathcal{I},\, w_{n+1}(\mb{a}, i) \leq w_n(\mb{a}, i)$,
    \item~$S_{n+1} \supseteq S_n \supseteq S_0$,
    \item~$S \subseteq R \Rightarrow \Reps(S) \subseteq \Reps(R)$,
    \item~$S \subseteq R \Rightarrow \Rbeps(S) \subseteq \Rbeps(R)$.
  \end{enumerate}
\end{lemma}
\begin{proof}
  (i), (ii), and (iii) follow directly from their definitions and the definition of~$C_n(\mb{a})$.
  \begin{enumerate}[label=(\roman*)]
    \setcounter{enumi}{3}
    \item Proof by induction. Consider the initial safe set,~$S_0$. By definition of~$C_0$ we have for all~$\mb{a} \in S_0$ and~$i \in \mathcal{I}$ that
        \begin{align*}
          l^i_1(\mb{a}) - L \|\mb{a} - \mb{a}\| = l^i_1(\mb{a}) \geq l^i_0(\mb{a}) \geq 0.
        \end{align*}
        It then follows from the definition of~$S_n$ that~${\mb{a} \in S_1}$.

      For the induction step, assume that for some~$n \geq 2$,~$S_{n-1} \subseteq S_n$ and let~${\mb{a} \in S_n}$.
      This means that for all ${i \in \mathcal{I}_g}$, ${ \exists \mb{z}_i \in S_{n-1}, l^i_n(\mb{z}_i) - L \| \mb{z}_i - \mb{a} \| \geq 0}$  by the definition of the safe set.
      But, since~${S_{n-1} \subseteq S_n}$, this implies that~${ \mb{z}_i \in S_n}$,~${\forall i \in \mathcal{I}_g }$.
      Furthermore, by part (ii),~$l^i_{n+1}(\mb{z}) \geq l^i_n(\mb{z}_i)$.
      Therefore, we conclude that for all~${i \in \mathcal{I}_g}$,~${l^i_{n+1}(\mb{z}_i) - L \| \mb{z}_i - \mb{a} \| \geq 0}$, which implies that~$\mb{a} \in S_{n+1}$.
      \item Let~$\mb{a} \in \Reps(S)$. Then, by definition, for all~$i \in \mathcal{I}_g$,~${\exists \mb{z}_i \in S, g_i(\mb{z}_i) - L \|\mb{z}_i - \mb{a} \| \geq 0}$.
        But, since~$S \subseteq R$, it means that~$\mb{z}_i \in R \forall i \in \mathcal{I}_g$, and, therefore,~$g_i(\mb{z}_i) - L \| \mb{z}_i - \mb{a} \| \geq 0$ for all~$i \in \mathcal{I}_g$ also implies that~$\mb{a} \in \Reps(R)$.
      \item This follows directly by repeatedly applying the result of part (v).
  \end{enumerate} \qed
\end{proof}

Using the previous results, we start by showing that, after a finite number of iterations, the safe set has to expand if possible. As a first step, note that the set of expanders and maximizers are contained in each other as well if the safe set does not increase:

\begin{lemma}
  \label{lem:gtmt}
  For any~$n_1 \geq n_0 \geq 1$, if~$S_{n_1} = S_{n_0}$, then, for any~$n$, such that~$n_0 \leq n < n_1$, it holds that
  \begin{align*}
    G_{n+1} \cup M_{n+1} \subseteq G_n \cup M_n.
  \end{align*}
\end{lemma}
\begin{proof}
  Given the assumption that~$S_n$ does not change, both~$G_{n+1} \subseteq G_n$ and~$M_{n+1} \subseteq M_n$ follow directly from the definitions of~$G_n$ and~$M_n$.
  In particular, for~$G_n$, note that for any~$\mb{a} \in S_n$,~$e^i_n(\mb{a})$ is decreasing in~$n$ for all~$i \in \mathcal{I}_g$, since~$u^i_n(\mb{a})$ are decreasing in~$n$.
  For~$M_n$, note that~$\max_{\mb{a}' \in S_n}l^f_n(\mb{a}')$ is increasing in~$n$, while~$u^f_n(\mb{a})$ is decreasing in~$n$ (see \cref{lem:basics} (i), (ii)). \qed
\end{proof}

When running the~\textsc{SafeOpt-MC} algorithm, we repeatedly choose the most uncertain element from~$G_n$ and~$M_n$. Since these sets are contained in each other if the safe set does not expand, we gain more information about these sets with each sample. Since the information gain is bounded, this allows us to bound the uncertainty in terms of the information gain over the entire set:

\begin{lemma}
  \label{lem:wt}
  For any~$n_1 \geq n_0 \geq 1$, if~$S_{n_1} = S_{n_0}$ and~$C_1 \defeq 8 / \log(1 + \sigma^{-2})$, then, for any~$n$, such that~$n_0 \leq t \leq n_1$, it holds for all~$i \in \mathcal{I}$ that
    \begin{align*}
      w_n(\mb{a}_n, i) \leq \sqrt{\frac{C_1 \beta_n \gamma_{|\mathcal{I}|n}}{n - n_0}}.
    \end{align*}
\end{lemma}
\begin{proof}
  Given \cref{lem:gtmt}, the definition of~$\mb{a}_n \defeq \argmax_{\mb{a} \in G_n \cup M_n}(w_n(\mb{a}))$, and the fact that, $w^i_n(\mb{a}_n) \leq 2\beta^{1/2}_n \max_{\in \in \mathcal{I}} \sigma_{n-1}(\mb{a}_n, i) = 2\beta^{1/2}_n(\mb{a}_n, i_n)$, the proof is completely analogous to that of Lemma 5.3 by \cite{Srinivas2012Gaussian}. We only highlight the main differences here, which results from having several functions.
  \begin{equation}
    w^i_n(\mb{a}_n) \leq 2\beta^{1/2}_n \max_{\in \in \mathcal{I}}\sigma_{n-1}(\mb{a}_n, i),
  \end{equation}
  which following~\cite[Lemma 5.4]{Srinivas2012Gaussian} leads to
  \begin{equation}
  \sum_{j=1}^n w^2_j(\mb{a}_j, i_j) \leq \beta_{|\mathcal{I}| n}^{1/2} \mathrm{I}(\hat{\mb{h}}_{\bar{\mathcal{D}_n}}; h ),
\end{equation}
where~$\bar{\mathcal{D}_n} = \{\mb{a}_n, i_n \}$. Now using monotonicity of the mutual information, we have that
\begin{align}
\sum_{j=1}^n w^2_j(\mb{a}_j, i_j) &\leq C_1 \beta_{|\mathcal{I}| n}^{1/2} \mathrm{I}(\hat{\mb{h}}_{\mathcal{D}_n \times \mathcal{I}}; h ), \\
&\leq C_1 \beta_{|\mathcal{I}| n}^{1/2} \gamma_{|\mathcal{I}|n}
\end{align}
by~\cref{eq:mutual_info_multi_meas_bound}. \qed
\end{proof}

\begin{corollary}
  \label{cor:wt}
  For any~$n \geq 1$, if~$C_1$ is defined as above,~$\N$ is the smallest positive integer satisfying~$\displaystyle\frac{\N}{\beta_{n+\N} \gamma_{|\mathcal{I}|(n+\N)}} \geq \frac{C_1}{\epsilon^2}$, and~$S_{n+\N} = S_n$, then, for any~$\mb{a} \in G_{n+\N} \cup M_{n+\N}$, and for all~$i \in \mathcal{I}$ it holds that
  \begin{align*}
    w_{n+\N}(\mb{a}, i) \leq \epsilon.
  \end{align*}
\end{corollary}

\paragraph{Note} Where needed in the following lemmas, we assume that~$C_1$ and~$\N$ are defined as above.

That is, after a finite number of evaluations~$N_n$ the most uncertain element within these sets is at most~$\epsilon$. Given that the reachability operator in~\cref{math:baseline_reachable_set} is defined in terms of the same accuracy, it allows us to show that after at most~$N_n$ evaluations, the safe set has to increase unless it is impossible to do so:

\begin{lemma}
  \label{lem:expansion0}
  For any~$n \geq 1$, if~$\Rbeps(S_0) \setminus S_n \not= \varnothing$, then~$\Reps(S_n) \setminus S_n \not= \varnothing$.
\end{lemma}
\begin{proof}
  Assume, to the contrary, that~$\Reps(S_n) \setminus S_n = \varnothing$.
  By definition,~$\Reps(S_n) \supseteq S_n$, therefore~$\Reps(S_n) = S_n$.
  Iteratively applying~$\Reps$ to both sides, we get in the limit~$\Rbeps(S_n) = S_n$.
  But then, by \cref{lem:basics} (iv) and (vi), we get
  \begin{align}
    \label{eq:expansion0}
    \Rbeps(S_0) \subseteq \Rbeps(S_n) = S_n,
  \end{align}
  which contradicts the lemma's assumption that~$\Rbeps(S_0) \setminus S_n \not= \varnothing$. \qed
\end{proof}

\begin{lemma}
  \label{lem:expansion}
  For any~$n \geq 1$, if~$\Rbeps(S_0) \setminus S_n \not= \varnothing$, then the following holds with probability at least~$1-\delta$:
  \begin{align*}
    S_{n+\N} \supsetneq S_n.
  \end{align*}
\end{lemma}
\begin{proof}
  By \cref{lem:expansion0}, we get that,~$\Reps(S_n) \setminus S_n \not= \varnothing$, Equivalently, by definition, for all~$i \in \mathcal{I}_g$
    \begin{align}
      \label{eq:expansion1}
      \exists \mb{a} \in \Reps(S_n) \setminus S_n,\,\exists \mb{z}_i \in S_n \colon g_i(\mb{z}_i) - \epsilon - L \| \mb{z}_i - \mb{a} \| \geq 0.
    \end{align}

  Now, assume, to the contrary, that~$S_{n+\N} = S_n$ (see \cref{lem:basics} (iv)), which implies that~$\mb{a} \in \mathcal{A} \setminus S_{n+\N}$ and~$\mb{z}_I \in S_{n+\N} \forall i \in \mathcal{I}_g$.
  Then, we have for all~$i \in \mathcal{I}_g$
  \begin{align*}
    u^i_{n+\N}(\mb{z}_i) - L \| \mb{z}_i -  \mb{a} \| &\geq g_i(\mb{z}_i) - L \| \mb{z} - \mb{a} \| \tag*{by \cref{thm:confidence_interval}}\\
    &\geq g_i(\mb{z}_i) - \epsilon - L \| \mb{z} - \mb{a} \| \\
    &\geq 0. \tag*{by \eqref{eq:expansion1}}
  \end{align*}
  Therefore, by definition,~$e_{n+\N}(\mb{z}_i) > 0$, which implies~$\mb{z}_i \in G_{n+\N},\, \forall i \in \mathcal{I}_g$.

  Finally, since~$S_{n+\N} = S_n$ and~$\mb{z}_i \in G_{n+\N} \forall i \in \mathcal{I}_g$, we know that for all~$i \in \mathcal{I},\, w_{n + \N}(\mb{a}', i) \leq \epsilon$. (\cref{cor:wt}). Hence, for all~$i \in \mathcal{I}_g$,
  \begin{align*}
    l^i_{n+\N}(\mb{z}_i) - L \|\mb{z}_i - \mb{a} \|
    &\geq g_i(\mb{z}_i) - w(\mb{z}_i, i) - L \|a - \mb{z}_i\| \tag*{by~\cref{thm:confidence_interval}} \\
    &\geq g_i(\mb{z}) - \epsilon - L \|a - \mb{z}_i \| \tag*{by \cref{cor:wt}}\\
    &\geq 0 \tag*{by \cref{eq:expansion1}}.
  \end{align*}
  This means we get~$\mb{a} \in S_{n+\N}$, which is a contradiction. \qed
\end{proof}

Intuitively, repeatedly applying the previous result leads to full safe exploration within a finite domain~$\mathcal{A}$. In particular, it follows that if~$S_{n+\N} = S_n$, then the safely reachable set has been fully explored to the desired accuracy. From this it follows, that the pessimistic estimate in~\cref{math:safeopt_pessimistic_estimate} is also $\epsilon$-close to the optimum value within the safely reachable set,~$\Rbeps(S_0)$:

\begin{lemma}
  \label{lem:max}
  For any~$n \geq 1$, if~$S_{n+\N} = S_n$, then the following holds with probability at least~$1-\delta$:
  \begin{align*}
    f(\mb{a}_{n+\N}) \geq \displaystyle\max_{\mb{a} \in \Rbeps(S_0)} f(\mb{a}) - \epsilon.
  \end{align*}
\end{lemma}
\begin{proof}
  Let~$\mb{a}^* \defeq \argmax_{\mb{a} \in S_{n+\N}} f(\mb{a})$.
  Note that~$\mb{a}^* \in M_{n+\N}$, since
  \begin{align*}
    u^f_{n+\N}(\mb{a}^*) &\geq f(\mb{a}^*) \tag*{by \cref{thm:confidence_interval}}\\
    &\geq f(\mb{a}) \tag*{by definition of~$\mb{a}^*$}\\
    &\geq l^f_{n+\N}(\mb{a}) \tag*{by \cref{thm:confidence_interval}}\\
    &\geq \max_{\mb{a} \in S_{n+\N}} l^f_{n+\N}(\mb{a}). \tag*{by definition of~$\mb{a}$}\\
  \end{align*}

  We will first show that~$f(\mb{a}_{n+\N}) \geq f(\mb{a}^*) - \epsilon$.
  Assume, to the contrary, that
  \begin{align}
    \label{eq:max1}
    f(\mb{a}_{n+\N}) < f(\mb{a}^*) - \epsilon.
  \end{align}
  Then, we have
  \begin{align*}
    l^f_{n+\N}(\mb{a}^*) &\leq l^f_{n+\N}(\mb{a}) \tag*{by definition of~$\mb{a}$}\\
    &\leq f(\mb{a}) \tag*{by \cref{thm:confidence_interval}}\\
    &< f(\mb{a}^*) - \epsilon \tag*{by \eqref{eq:max1}}\\
    &\leq u^f_{n+\N}(\mb{a}^*) - \epsilon \tag*{by \cref{thm:confidence_interval}}\\
    &\leq l^f_{n+\N}(\mb{a}^*), \tag*{by \cref{cor:wt} and~$\mb{a}^* \in M_{n+\N}$}
  \end{align*}
  which is a contradiction.

  Finally, since~$S_{n+\N} = S_n$, \cref{lem:expansion} implies that~$\Rbeps(S_0) \subseteq S_n = S_{n+\N}$.
  Therefore,
  \begin{align*}
    \max_{\mb{a} \in \Rbeps(S_0)} f(\mb{a}) - \epsilon &\leq \max_{\mb{a} \in S_{n+\N}} f(\mb{a}) - \epsilon \tag*{$\Rbeps(S_0) \subseteq S_{n+\N}$}\\
    &= f(\mb{a}^*) - \epsilon \tag*{by definition of~$\mb{a}^*$}\\
    &\leq f(\mb{a}_{n+\N}) \tag*{proven above}.
  \end{align*} \qed
\end{proof}

\begin{corollary}
  \label{cor:max}
  For any~$n \geq 1$, if~$S_{n+\N} = S_n$, then the following holds with probability at least~$1-\delta$:
  \begin{align*}
    \forall n' \geq 0, f(\mb{a}_{n+\N+n'}) \geq \displaystyle\max_{\mb{a} \in \Rbeps(S_0)} f(\mb{a}) - \epsilon.
  \end{align*}
\end{corollary}
\begin{proof}
  This is a direct consequence of the proof of the preceding lemma, combined with the facts that both~$S_{n+\N+n'}$ and~$l^f_{n+\N+n'}(\mb{a}_{n+\N+n'})$ are increasing in~$n'$ (by \cref{lem:basics} (iv) and (ii) respectively), which imply that~$\max_{\mb{a} \in S_{n+\N+n'}} l^f_{n+\N+n'}(\mb{a})$ can only increase in~$n'$. \qed
\end{proof}

Moreover, since we know the true function is contained within the confidence intervals, we cannot go beyond the safe set if we knew the function perfectly everywhere,~$\Rbo$:

\begin{lemma}
  \label{lem:stleq}
  For any~$n \geq 0$, the following holds with probability at least~$1-\delta$:
  \begin{align*}
    S_n \subseteq \Rbo(S_0).
  \end{align*}
\end{lemma}
\begin{proof}
  Proof by induction. For the base case,~$n = 0$, we have by definition that~$S_0 \subseteq \Rbo(S_0)$.

  For the induction step, assume that for some~$n \geq 1$,~$S_{n-1} \subseteq \Rbo(S_0)$.
  Let~$\mb{a} \in S_n$, which, by definition, means that for all~$i \in \mathcal{I}_g$~$\exists \mb{z}_i \in S_{n-1}$, such that
  \begin{align*}
    & l^i_n(\mb{z}_i) - L \| \mb{z}_i - \mb{a} \| \geq 0\\
    \Rightarrow\ \ & g_i(\mb{z}_i) - L \| \mb{z}_i - \mb{a} \| \geq 0. \tag*{by \cref{thm:confidence_interval}}\\
  \end{align*}
  Then, by definition of~$\Rbo$ and the fact that~$\mb{z}_i \in \Rbo(S_0)$ for all~$i \in \mathcal{I}_g$, it follows that~$\mb{a} \in \Rbo(S_0)$. \qed
\end{proof}

The previous results is enough to show that we eventually explore the full safe set by repeatedly applying~\cref{lem:expansion}:

\begin{lemma}
  \label{lem:existence}
  Let~$n^*$ be the smallest integer, such that~$n^* \geq |\Rbo(S_0)|T_{n^*}$.
  Then, there exists~$n_0 \leq n^*$, such that~$S_{n_0+T_{n_0}} = S_{n_0}$.
\end{lemma}
\begin{proof}
  Assume, to the contrary, that for any~$n \leq n^*$,~$S_n \subsetneq S_{n+T_n}$. (By \cref{lem:basics} (iv), we know that~$S_n \subseteq S_{n+T_n}$.)
  Since~$\N$ is increasing in~$n$, we have
  \begin{align*}
    S_0 \subsetneq S_{n_0} \subseteq S_{T_{n^*}} \subsetneq S_{T_{n^*}+T_{T_{n^*}}} \subseteq S_{2T_{n^*}} \subsetneq \cdots,
  \end{align*}
  which implies that, for any~$0 \leq k \leq |\Rbo(S_0)|$, it holds that~$|S_{kT_{n^*}}| > k$.
  In particular, for~$k^* \defeq |\Rbo(S_0)|$, we get
  \begin{align*}
    |S_{k^*T}| > |\Rbo(S_0)|
  \end{align*}
  which contradicts~$S_{k^*T} \subseteq \Rbo(S_0)$ by \cref{lem:stleq}. \qed
\end{proof}

\begin{corollary}
  \label{cor:existence}
  Let~$n^*$ be the smallest integer, such that~$\displaystyle\frac{n^*}{\beta_{n^*}\gamma_{|\mathcal{I}|n^*}} \geq \frac{C_1|\Rbo(S_0)|}{\epsilon^2}$.
  Then, there exists~$n_0 \leq n^*$, such that~$S_{n_0+T_{n_0}} = S_{n_0}$.
\end{corollary}
\begin{proof}
This is a direct consequence of combining \cref{lem:existence} and \cref{cor:wt}. \qed
\end{proof}

Since we showed that we completely explore the safe set and that we remain safe throughout the exploration procedure, we are ready to state the main results:

\begin{lemma}
  \label{lem:safe}
  If~$h$ is \LLC, then, for any~$n \geq 0$, the following holds with probability at least~$1-\delta$ for all~${i \in \mathcal{I}_g}$:
  \begin{align*}
    \forall \mb{a} \in S_n, g_i(\mb{a}) \geq 0.
  \end{align*}
\end{lemma}
\begin{proof}
  We will prove this by induction.
  For the base case~$n = 0$, by definition, for any~$\mb{a} \in S_0$ and~$i \in \mathcal{I}_g$,~$g_i(\mb{a}) \geq 0$.

  For the induction step, assume that for some~$n \geq 1$, for any~$\mb{a} \in S_{n-1}$ and for all~$i \in \mathcal{I}_g$,~$g_i(\mb{a}) \geq 0$.
  Then, for any~$\mb{a} \in S_n$, by definition, for all~$i \in \mathcal{I}_g$,~$\exists \mb{z}_i \in S_{n-1}$,
  \begin{align*}
    0 &\leq l^i_n(\mb{z}_i) - L \| \mb{z}_i - \mb{a} \|\\
      &\leq g_i(\mb{z}_i) - L \|\mb{z}_i - \mb{a}\| \tag*{by \cref{thm:confidence_interval}}\\
      &\leq g_i(\mb{a}) \tag*{by~$L$-Lipschitz-continuity}.
  \end{align*} \qed
\end{proof}

\safeopttheorem*
\begin{proof}
  The first part of the theorem is a direct consequence of \cref{lem:safe}.
  The second part follows from combining \cref{cor:max} and \cref{cor:existence}. \qed
\end{proof}

\bibliographystyle{SageH}
\bibliography{root}

\end{document}